\documentclass[letterpaper]{article}
\usepackage{aaai2026} 
\usepackage{times} 
\usepackage{helvet} 
\usepackage{courier} 
\usepackage[hyphens]{url} 
\usepackage{graphicx} 
\urlstyle{rm} 
\usepackage{graphicx}  
\usepackage{natbib}  
\usepackage{caption}  
\frenchspacing 
\setlength{\pdfpagewidth}{8.5in} 
\setlength{\pdfpageheight}{11in} 
%
\pdfinfo{
/TemplateVersion (2026.1)
}
\usepackage{algorithm}
\usepackage{algorithmic}
\usepackage{booktabs}
\usepackage{amsmath} 
\usepackage{amssymb}
\usepackage{graphicx} 
\usepackage{url}
\usepackage{amsmath}
\usepackage{amssymb}
\usepackage{amsthm}



\newtheorem{theorem}{Theorem}
\newtheorem{lemma}{Lemma}

%
\usepackage{newfloat}
\usepackage{listings}
\DeclareCaptionStyle{ruled}{labelfont=normalfont,labelsep=colon,strut=off} 
\lstset{%
	basicstyle={\footnotesize\ttfamily},
	numbers=left,numberstyle=\footnotesize,xleftmargin=2em,
	aboveskip=0pt,belowskip=0pt,%
	showstringspaces=false,tabsize=2,breaklines=true}
\floatstyle{ruled}
\newfloat{listing}{tb}{lst}{}
\floatname{listing}{Listing}
%
\pdfinfo{
/TemplateVersion (2026.1)
}

\setcounter{secnumdepth}{0} 

%


\title{Lightweight Optimal-Transport Harmonization on Edge Devices}
\author{
    Maria Larchenko\textsuperscript{\rm 1},
    Dmitry Guskov\textsuperscript{\rm 2}, 
    Alexander Lobashev\textsuperscript{\rm 2}, 
    Georgy Derevyanko\textsuperscript{\rm 1}
}
\affiliations{
    \textsuperscript{\rm 1} Magicly AI, Dubai, UAE \\
    \textsuperscript{\rm 2}Glam AI, San Francisco, USA\\
\{mariia.larchenko, guskov01dmitry, lobashevalexander, georgy.derevyanko\}@gmail.com
    
%
}

\begin{document}

\maketitle

\begin{abstract}
Color harmonization adjusts the colors of an inserted object so that it perceptually matches the surrounding image, resulting in a seamless composite. The harmonization problem naturally arises in augmented reality (AR), yet harmonization algorithms are not currently integrated into AR pipelines because real-time solutions are scarce. In this work, we address color harmonization for AR by proposing a lightweight approach that supports on-device inference. For this, we leverage classical optimal transport theory by training a compact encoder to predict the Monge-Kantorovich transport map. We benchmark our MKL-Harmonizer algorithm against state-of-the-art methods and demonstrate that for real composite AR images our method achieves the best aggregated score. We release our dedicated AR dataset of composite images with pixel-accurate masks and data-gathering toolkit to support further data acquisition by researchers.
\end{abstract}

\begin{links}
    \link{Code}{https://github.com/maria-larchenko/mkl-harmonizer}
\end{links}

\section{Introduction}
Composite images are created by pasting a foreground object onto a background image under a binary mask. Although spatial alignment may be perfect, the inserted region usually looks out of place because it was captured under different illumination, camera response, or post-processing pipeline. Image harmonization modifies the pasted region so that its appearance becomes perceptually consistent with its surroundings, yielding a harmonized image. Existing harmonization methods are predominantly offline and assume desktop-class resources, limiting their adoption in interactive applications \cite{niu2021making}.

\begin{figure}
    \centering
    \includegraphics[width=0.375\linewidth]{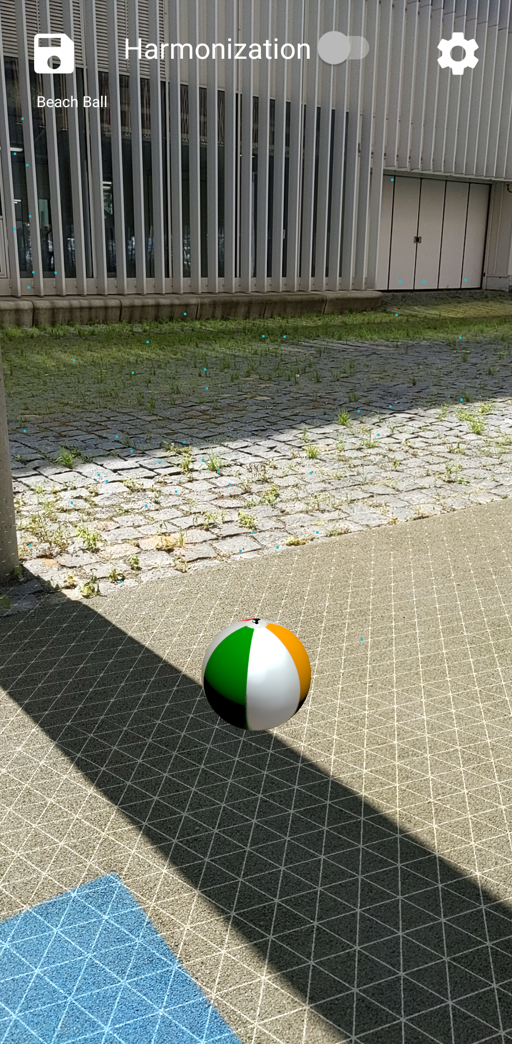}
    \includegraphics[width=0.375\linewidth]{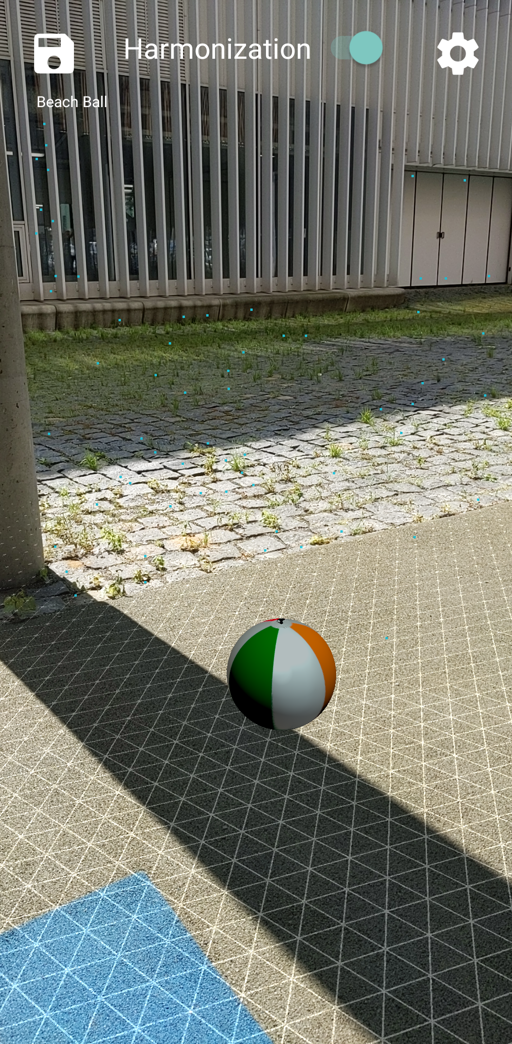}
    \caption{Harmonization running on edge device.}
    \label{fig:pair_of_balls}
\end{figure}

Early and influential works adopted encoder-decoder architectures, such as U-Net \cite{ronneberger2015u}, to perform a dense, pixel-to-pixel mapping of the foreground. Models like DoveNet \cite{cong2020dovenet}, RainNet \cite{ling2021region}, or IntrinsicHarmony \cite{guo2021intrinsic} were able to learn rich semantic representations, leading to significant improvements over classical techniques. However, dense prediction models share a fundamental limitation in their computational and memory requirements. Consequently, research was largely confined to low-resolution inputs, typically 256x256 pixels.

To address these limitations, Harmonizer \cite{ke2022harmonizer} regresses coefficients for filters like brightness and contrast, while DCCF \cite{xue2022dccf} proposes a set of neural color filters whose application is controlled by a predicted coefficient map. 
More recent approaches have focused on learning the transformation itself. PCT-Net \cite{guerreiro2023pct} learns to predict the parameters for a smooth field of pixel-wise affine color transforms. A key contribution was to perform upsampling on the low-resolution parameter map rather than the image itself.
INR-Harmonization (HINet) \cite{chen2023dense} introduced the use of an Implicit Neural Representation, where an encoder predicts the weights of a small MLP that maps a pixel's coordinate to its final harmonized color. 

While deep learning has dominated recent literature, the principles of color transfer are rooted in classical statistical methods. Recently, Optimal Transport (OT) theory has experienced a significant resurgence, particularly in the context of generative modeling \cite{liu2023flow}.

Historically, OT provided a way to map one color distribution to another. For Gaussian approximation of source and target color distributions, optimal transport map has a closed-form linear solution, known as Monge-Kantorovich Linear map (MKL) \cite{pitie2007linear}.
For the color transfer problem, the MKL filter yields perceptually coherent results. 
Later, Rabin et al. \cite{rabin2010regularization} introduced a relaxed formulation of OT for adaptive color transfer and Bonneel et al. \cite{bonneel2013example} applied OT principles to the complex task of video color grading.
MKL filters first introduced in 2007 are still a strong competitor for the more recent color transfer algorithms \cite{larchenko2025color}.

To apply MKL color filter one has to calculate 12 parameters, derived from statistics of source and target color distributions. To the best of our knowledge, this approach was not previously tested for the color harmonization problem. Since the target color distribution is unknown for inserted object, our approach is to train a network to predict the 12 parameters of Monge-Kantorovich Linear transformation. The proposed solution is promisingly lightweight, fast and fits into the constraints of on-device augmented reality.

Our work makes the following contributions
\begin{itemize}
    \item We propose MKL-Harmonizer, a novel lightweight solution for color harmonization, based on prediction of Monge-Kantorovich Linear maps.
    \item We built a dedicated tool to gather AR-specific composite images and masks in the wild and have collected a set of 327 images for human evaluation.
    \item We demonstrate that our approach achieves superior performance in terms of aggregated speed-quality metric.
    \item We deploy our solution on edge devices and measure its real-time performance.
\end{itemize}

\section{Background}

\textbf{Augmented Reality} AR systems render virtual content into a live camera stream at video rate. Here, composition and harmonization happen every frame: the renderer generates a synthetic object that must match the photometric properties of the camera feed on resource-constrained hardware (mobile GPUs, XR headsets, embedded devices). 
Despite its importance for realism, advanced color harmonization is missing from today’s mainstream AR tool-chains like ARKit, ARCore, Meta Spark, Snap Lens Studio.
The main barriers are latency and mobile compute limits.
Instead, these platforms rely on light estimation mechanisms:
\begin{itemize}
    \item Main direction of light and it's intensity
    \item Environmental cube maps \cite{greene1986environment}
    \item Spherical harmonic lighting \cite{ramamoorthi2001efficient}
    \item Global adjustments such as exposure or white-balance.
\end{itemize}

Because of computational restrictions color harmonization for AR objects remains an under-studied area. However, this field demonstrates its own unique challenges.

A typical training data for harmonization comprises of real photos or rendered scenes which then are augmented to produce synthetic composite images, i.e. every augmented image has a ground true solution. For AR composite images, with a real image on the background and virtual foreground objects, it is impossible to produce training data in the same way. Thus, even training on both real and virtual scenes leads to out-of-distribution input during inference. 

Secondly, AR composites do not suffer from the imperfect masks common in photo-editing datasets, as the rendering engine provides a pixel-perfect mask for every virtual object. On one hand, this simplifies harmonization by removing visible mask artifacts model should care about. On the other hand, models trained on real composites may struggle with inference on AR data due to the exposure bias.

\textbf{Exposure Bias in Image Harmonization} Harmonization algorithms are typically trained on datasets of composite images where foreground object masks are generated either manually or through segmentation algorithms. A key characteristic of these masks, such as those in the widely-used iHarmony4 dataset, is that they often include background pixels near the object's boundary, as shown in Fig.~ \ref{fig:exposure_bias} A. This ``pixel-imperfect'' nature can provide the model with crucial information about the ground truth harmonized background.

The presence of these boundary pixels can inadvertently teach the harmonization network to over-rely on this specific information. The model learns to predict the harmonization filter by comparing the ``leaked'' background pixels within the mask to the visible unharmonized background. This comparison is an effective shortcut, as these two background regions are far more similar in content than the foreground object and the background are.

This reliance creates a training-inference mismatch, a problem also known as the exposure bias. When the model is later applied to an object rendered by a 3D engine or cut from another image, the mask is typically ``pixel-perfect'' and contains no leaked background information. Since the model was trained to depend on this boundary information, its performance can degrade significantly when it is absent.

\begin{figure}
    \centering
    \includegraphics[width=1.0\linewidth]{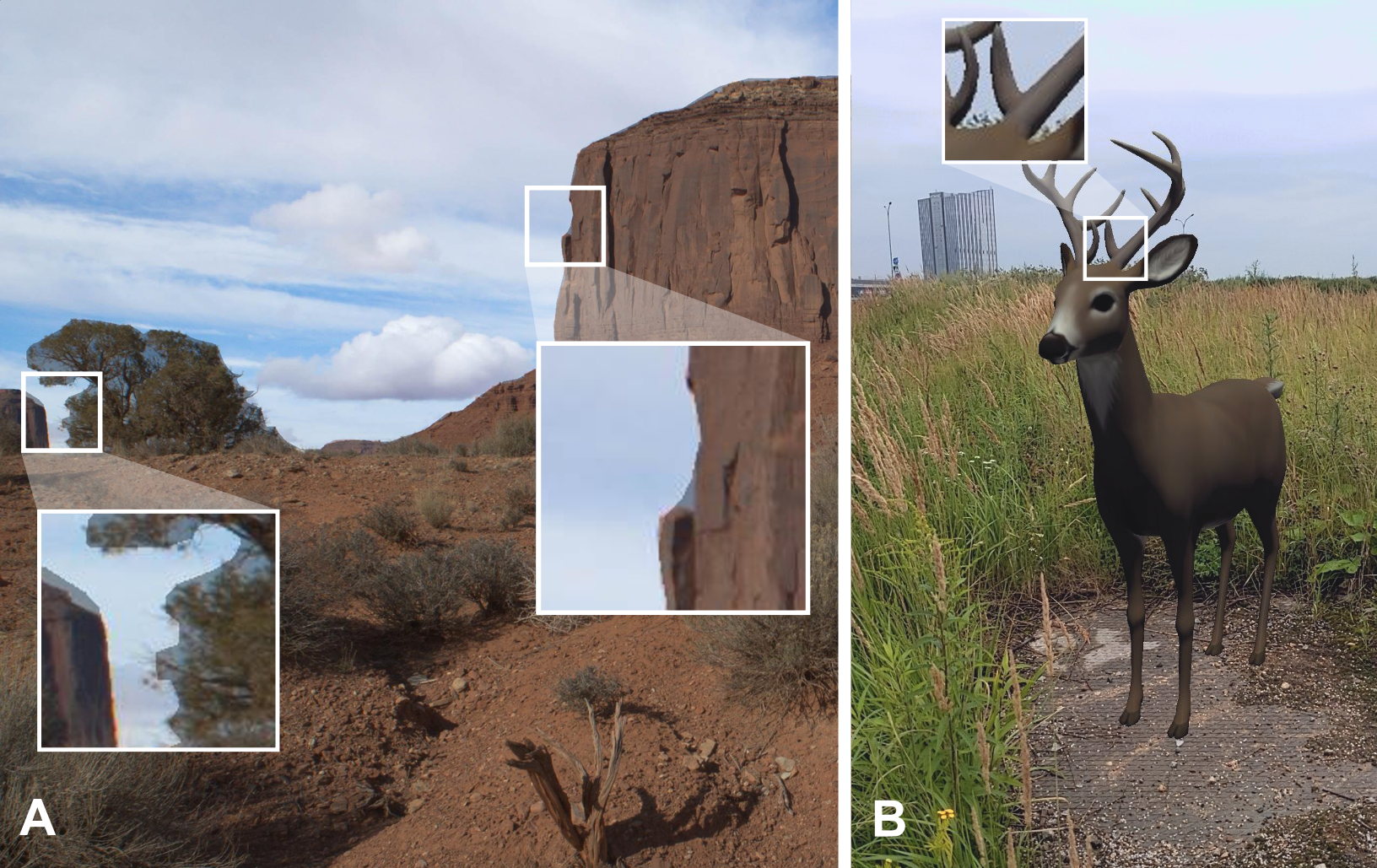}
    \caption{Exposure bias in image harmonization. Augmented image from iHarmony4 training partition. (A) Small regions near the boundary of the mask contains an information about unharmonized background and can inadvertently teach the harmonization network to over-rely on this specific information. (B) For object inserted from a 3D engine the mask is ``pixel-perfect''.}
    \label{fig:exposure_bias}
\end{figure}

\textbf{Optimal Transport} Color distributions can be modeled with continuous or discrete probability density function in RGB space. The problem of color harmonization can be seen as predicting a new color distribution for the pasted object based on the scene. That is, we want to find a new color distribution for the object as if it was in the scene originally.

We denote the original color density as $\pi_{0}$ and the predicted color density as $\pi_{1}$, with $im$ representing a scene image. The random variables $X_0 \sim \pi_{0}$ and $X_1 \sim \pi_{1}$ represent pixels sampled from their respective distributions. The harmonization problem means finding a deterministic transport map $T_{im}: \mathbb{R}^3 \rightarrow \mathbb{R}^3$ such that $T_y(X_0) = X_1$, i.e.
\begin{equation}
    \label{eq:mass_preserving_transport}
    \pi_{0}(x) = \pi_{1}(T_{im}(x)) \left|\operatorname{det} J_{T_{im}}(x)\right|,
\end{equation}
where $J_T(x)$ is the Jacobian of $T$ taken at point $x$.

\textbf{Monge's Optimal Transportation}
To select a unique map from the many that could satisfy the mass-preserving property in Eq. \ref{eq:mass_preserving_transport}, we introduce a cost function. With a standard quadratic cost, $c(x, y) = \left\|x - y\right\|^2$, the Monge problem seeks the optimal deterministic map $T^*_{im}$ that minimizes the total expected cost of transforming the source distribution to the target:
\begin{align}
    \label{eq:monge}
    \operatorname{Cost}\left[ T_{im} \right] &= \mathbb{E}\big( \left\|X_{1} - X_{0}\right\|^{2} \big) \\
    &= \int_{\mathcal{X}_0} (T_{im}(x) - x)^{2} \pi_{0}(x) dx.
\end{align}
For continuous density functions with finite second moments, a unique solution to this problem is guaranteed to exist \cite{villani2009optimal}.

When both the source and target color distributions are approximated as multivariate Gaussians, i.e., $X_0 \sim \mathcal{N}(\mu_0, \Sigma_0)$ and $X_1 \sim \mathcal{N}(\mu_1, \Sigma_1)$, this optimal transport map has a well-known closed-form linear solution, the Monge-Kantorovich Linear (MKL) filter \cite{pitie2007linear}. The optimal map $T^*_{im}$ is given by:
\begin{equation}
    \label{eq:mkl_transform}
    T^*_{im}(x) = \mu_1 + A(x - \mu_0),
\end{equation}
where the linear transformation matrix $A$ is computed as:
\begin{equation}
    \label{eq:mkl_matrix}
    A = \Sigma_0^{-1/2} \left( \Sigma_0^{1/2} \Sigma_1 \Sigma_0^{1/2} \right)^{1/2} \Sigma_0^{-1/2}.
\end{equation}

The transformation is fully characterized by the means $\mu_0, \mu_1$ and covariance matrices $\Sigma_0, \Sigma_1$.
Our method, therefore, involves training a neural network to predict these target statistics ($\mu_1, \Sigma_1$) based on the input scene, from which the MKL filter can be directly computed.

\section{Theoretical Analysis}
This section investigates the theoretical conditions under which a linear MKL map can effectively approximate a complex, non-linear color harmonization task. We provide a formal justification by bounding the approximation error, demonstrating that the MKL filter is effective when the true color transformation is smooth (i.e., has a small Lipschitz constant) and color distributions are not pathologically located at the extreme boundaries of the color gamut. 

\textbf{Model Assumptions and Preliminaries}
Let the color space be the unit cube $\mathcal{X} = [0,1]^3$. Let the unharmonized and target color distributions, $\pi_0$ and $\pi_1$, be probability measures supported on $\mathcal{X}$. We impose the following regularity conditions for our analysis.
\begin{description}
    \item[Assumption 1 (Map Regularity).] The true optimal transport map, $T^*: \mathcal{X} \to \mathcal{X}$, exists and is $L$-Lipschitz continuous for a constant $L < \infty$. The existence of such a map for a general case is non-trivial and is guaranteed under strong conditions on the measures, such as uniform log-concavity of their densities \cite{caffarelli1992regularity}. 
    \item[Assumption 2 (Distribution Regularity).] The source distribution $\pi_0$ has mean $\mu_0$ and a non-singular covariance matrix $\Sigma_0$, which means that $\Sigma_0^{1/2}$ needed for the MKL map is well-defined. All expectations $\mathbb{E}[\cdot]$ are taken with respect to $X_0 \sim \pi_0$.
\end{description}
Our algorithm approximates $T^*$ with the MKL map, $T_{\mathrm{MKL}}(x) = \mu_1 + A(x-\mu_0)$, which is the optimal map for transporting between Gaussian surrogates $\mathcal{N}(\mu_0, \Sigma_0)$ and $\mathcal{N}(\mu_1, \Sigma_1)$ \cite{peyre2019computational}. Since the MKL map could potentially map part of the density outside of the unit cube, we apply clipping operation $\hat{T}_{\mathrm{MKL}} := \Pi_{\mathcal{X}} \circ T_{\mathrm{MKL}}$, where $\Pi_{\mathcal{X}}$ is the  Euclidean projection onto $\mathcal{X}$, i.e. clipping transformed colors to the valid $[0,1]^3$ gamut. We proof that the clipping operation coincides with the Euclidean projection in Lemma \ref{lem:clip-is-proj}.

\newcommand{\clip}{\operatorname{clip}}

\begin{lemma}[Clipping equals Euclidean projection]\label{lem:clip-is-proj}
Let the \emph{clipping operator} $\clip:\mathbb{R}^{d}\!\to\![0,1]^{d}$ be defined component-wise by
\begin{equation}
(\clip(z))_j
   = \min\{1,\max\{0,z_j\}\},
   \qquad j=1,\dots,d.
\end{equation}
Then for every $z\in\mathbb{R}^{d}$ the vector $y=\clip(z)$ is the unique Euclidean projection of $z$ onto the cube $\mathcal{X}=[0,1]^d$; that is,
$\clip(z)=\Pi_{\mathcal{X}}(z)$.
\end{lemma}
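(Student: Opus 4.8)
The plan is to exploit the fact that both the objective and the constraint set in the projection problem
$\Pi_{\mathcal{X}}(z)=\arg\min_{y\in[0,1]^d}\|z-y\|^2$
are separable across coordinates: $\|z-y\|^2=\sum_{j=1}^d (z_j-y_j)^2$ and $[0,1]^d=\prod_{j=1}^d[0,1]$. Hence the $d$-dimensional minimization decouples into $d$ independent scalar problems $\min_{t\in[0,1]}(z_j-t)^2$, and it suffices to show that each scalar problem has the unique minimizer $t^\star_j=\min\{1,\max\{0,z_j\}\}$.

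\textbf{Scalar case.}
For fixed $j$, the map $t\mapsto(z_j-t)^2$ is strictly convex on $\mathbb{R}$ with unconstrained minimizer $t=z_j$. Restricting to the interval $[0,1]$, I would argue by the three cases for the location of $z_j$: if $z_j\in[0,1]$ the constrained minimizer is $z_j$ itself; if $z_j<0$ the function is strictly increasing on $[0,1]$ so the minimizer is the left endpoint $0$; if $z_j>1$ it is strictly decreasing on $[0,1]$ so the minimizer is the right endpoint $1$. In every case the minimizer equals $\min\{1,\max\{0,z_j\}\}=(\clip(z))_j$, and strict convexity gives that it is the \emph{unique} minimizer in $[0,1]$.

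\textbf{Assembling and uniqueness.}
Combining the coordinates, $y=\clip(z)$ is a minimizer of $\|z-y\|^2$ over $[0,1]^d$; uniqueness follows either by multiplying the scalar uniqueness statements, or, more cleanly, by invoking the standard Hilbert-space fact that Euclidean projection onto a nonempty closed convex set (here the compact cube $\mathcal{X}$) exists and is unique. An equivalent and very short alternative I might present instead: verify directly that $y=\clip(z)$ satisfies the variational inequality $\langle z-y,\;x-y\rangle\le 0$ for all $x\in\mathcal{X}$, which characterizes the projection — coordinatewise, $(z_j-y_j)(x_j-y_j)\le 0$ holds because $z_j-y_j$ and $x_j-y_j$ have opposite signs (or one of them is $0$) whenever $y_j$ is a clipped endpoint, and $z_j-y_j=0$ otherwise.

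\textbf{Main obstacle.}
There is no serious technical obstacle here; the only point requiring a little care is making the separability argument airtight (one must note that minimizing a sum of nonnegative terms over a product set is achieved exactly by minimizing each term over its own factor) and stating the uniqueness cleanly rather than hand-waving it. I would keep the write-up to a few lines, most likely via the variational-inequality characterization, since it makes both existence of the minimizer at $\clip(z)$ and its uniqueness transparent.
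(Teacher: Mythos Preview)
Your proposal is correct and follows essentially the same route as the paper: decouple the projection problem via separability of the squared norm and the product structure of the cube, solve each scalar problem by the three-case analysis, and conclude uniqueness from strict convexity. The variational-inequality alternative you sketch is a valid shortcut not used in the paper, but your primary argument matches theirs.
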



Let us note that any projection operator is $1$-Lipschitz since the distance between any two points after projection is never greater than their distance before projection. We seek to bound the expected squared error $\mathcal{E} := \mathbb{E}[ \| \hat{T}_{\mathrm{MKL}}(X_0) - T^*(X_0) \|^2 ]$.

\begin{theorem}[Error Bound for L-Lipschitz Color Maps]
Let Assumptions 1 and 2 hold. The total error $\mathcal{E}$ is bounded as:
\begin{equation}
    \mathcal{E} \le 2\mathcal{E}_{clip} + 2\mathcal{E}_{lin},
\end{equation}
where the clipping error is $\mathcal{E}_{clip} := \mathbb{E} [ \| T_{MKL}(X_0) - \hat{T}_{MKL}(X_0) \|^2 ]$, and the linearity error, $\mathcal{E}_{lin}$, is bounded by:
\begin{equation}
    \mathcal{E}_{lin} \le 2 B^2 + 2 (\|A\|_{op} + L)^2 \cdot \mathrm{tr}(\Sigma_0).
\end{equation}
Here, $B = |\mu_1 - T^*(\mu_0)|$ is a bias term, $||A||_{op}$ is the spectral norm of the MKL matrix, i.e. its largest singular value, which depends only on source and target distribution covariances, and $\mathrm{tr}(\Sigma_0)$ is the trace of the source covariance matrix.
\end{theorem}
\begin{proof}[Proof Sketch]
Insert and subtract $T_{\mathrm{MKL}}(X_0)$ inside the norm, then use $\|u+v\|^{2}\!\le 2\|u\|^{2}+2\|v\|^{2}$ to obtain $\mathcal{E}\le 2\mathcal{E}_{\text{clip}}+2\mathcal{E}_{\text{lin}}$.  
To bound $\mathcal{E}_{\text{lin}}$, add and subtract $T^{*}(\pi_{0})$, invoke the $L$-Lipschitz property of $T^{*}$ together with $\|A\|_{\text{op}}$ to control $\|T_{\mathrm{MKL}}(x)-T^{*}(x)\|$ by a bias term $B$ plus a scaled deviation from the mean.  
Squaring, taking expectations, and using $\mathbb{E}\|X_{0}-\mu_{0}\|^{2}=\mathrm{tr}(\Sigma_{0})$ yields the claimed bound.
\end{proof}

This theorem bounds the approximation error, linking it to the Lipschitz constant $L$ of the true optimal transport map and the tail probability $\mathcal{E}_{clip} \le d \cdot \mathbb{P}[T_{\mathrm{MKL}}(X_0) \notin \mathcal{X}]$. The latter bound holds due to the following Lemma \ref{lem:clip-tail}.

\begin{lemma}[Tail–probability bound for the clipping error]%
\label{lem:clip-tail}
Let $\Pi_{\mathcal{X}}=\clip(\cdot)$ be the Euclidean projection onto
$\mathcal{X}=[0,1]^d$, defined as
\begin{align}
    (\clip(z))_j \;:=\; \min\bigl\{\,1,\;\max\{0,\;z_j\}\bigr\},
\end{align}
for $j=1,\dots,d,\;z\in\mathbb{R}^d$ and define
\begin{align}
\mathcal{E}_{\text{clip}}
\;:=\;
\mathbb{E}\!\bigl[\|Z-\Pi_{\mathcal{X}}Z\|^2\bigr],
\qquad Z\in\mathbb{R}^d.
\end{align}
Then
\begin{align}
\mathcal{E}_{\text{clip}}
  &\le
  d\,\mathbb{P}[Z\notin\mathcal{X}],
  \label{eq:clip-bound}
\end{align}
and for our application with $Z=T_{\mathrm{MKL}}(X_0)$ and $d=3$,
\begin{align}
\mathcal{E}_{\text{clip}}
  &\le
    3\,\mathbb{P}\bigl[T_{\mathrm{MKL}}(X_0)\notin\mathcal{X}\bigr].
  \tag{\ref{eq:clip-bound}$'$}
\end{align}
\end{lemma}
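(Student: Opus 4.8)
The plan is to reduce the squared Euclidean error to a sum over the color channels, observe that each channel contributes only on the ``out of gamut'' event, and then cap each channel's contribution by a constant. First, by Lemma~\ref{lem:clip-is-proj} the projection $\Pi_{\mathcal{X}}$ acts component-wise as $\clip$, so
\[
\|Z-\Pi_{\mathcal{X}}Z\|^2=\sum_{j=1}^d\bigl(Z_j-\clip(Z_j)\bigr)^2=\sum_{j=1}^d\operatorname{dist}(Z_j,[0,1])^2,
\]
where $\operatorname{dist}(t,[0,1])=\max\{0,-t,t-1\}$ is the distance from $t\in\mathbb{R}$ to the unit interval. The structural point is that $\operatorname{dist}(Z_j,[0,1])>0$ only when $Z_j\notin[0,1]$, and $\{Z_j\notin[0,1]\}\subseteq\{Z\notin\mathcal{X}\}$; hence the whole integrand vanishes off the event $\{Z\notin\mathcal{X}\}$ and it suffices to bound it there.

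Second, I would bound each of the (at most $d$) nonzero summands by $1$ on $\{Z\notin\mathcal{X}\}$. This is the only real step: $\operatorname{dist}(t,[0,1])^2\le1$ exactly when $t\in[-1,2]$, so the clean constant $d$ requires that the MKL map overshoot the gamut by at most one unit per channel, i.e. $T_{\mathrm{MKL}}(\mathcal{X})\subseteq[-1,2]^d$. In our regime this is mild --- the affine $T_{\mathrm{MKL}}$ carries the unit cube into a bounded dilate of it, the means $\mu_0,\mu_1$ lie in $\mathcal{X}$, and this is exactly the ``distributions not pathologically at the extreme boundary of the gamut'' assumption invoked for the theorem --- so I would either adopt it as a standing hypothesis or, if one wants no extra assumption, replace $d$ by the squared diameter of the smallest axis-aligned box containing $T_{\mathrm{MKL}}(\mathcal{X})$, at the price of a looser constant. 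Granting the per-channel cap, on $\{Z\notin\mathcal{X}\}$ we get $\|Z-\Pi_{\mathcal{X}}Z\|^2=\sum_{j=1}^d\operatorname{dist}(Z_j,[0,1])^2\le d$, hence pointwise $\|Z-\Pi_{\mathcal{X}}Z\|^2\le d\,\mathbf{1}[Z\notin\mathcal{X}]$.

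Finally, taking expectations over $X_0\sim\pi_0$ gives
\[
\mathcal{E}_{\text{clip}}=\mathbb{E}\bigl[\|Z-\Pi_{\mathcal{X}}Z\|^2\bigr]\le d\,\mathbb{E}\bigl[\mathbf{1}[Z\notin\mathcal{X}]\bigr]=d\,\mathbb{P}[Z\notin\mathcal{X}],
\]
which is \eqref{eq:clip-bound}; setting $d=3$ and $Z=T_{\mathrm{MKL}}(X_0)$ yields the primed form. The main obstacle is the uniform cap in the middle step: without any control on how far outside $[0,1]^3$ the MKL map can push a color, the squared clipping distance is unbounded and the constant $d$ cannot be justified; apart from that, the argument is only Lemma~\ref{lem:clip-is-proj} together with monotonicity of expectation.
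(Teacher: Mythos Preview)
Your argument is essentially the paper's: decompose $\|Z-\clip(Z)\|^2$ coordinate-wise, cap each summand by $1$, note the whole sum vanishes on $\{Z\in\mathcal{X}\}$ so that pointwise $\|Z-\clip(Z)\|^2\le d\,\mathbf{1}_{\{Z\notin\mathcal{X}\}}$, and take expectations. The paper's proof consists of exactly these three lines, asserting ``$|z_j-(\clip z)_j|\le 1$ for each $j$'' without qualification and then concluding.

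You are right to flag the per-channel cap as the only real step. The inequality $|z_j-\clip(z_j)|\le 1$ fails whenever $z_j\notin[-1,2]$, so for an arbitrary random vector $Z$ in $\mathbb{R}^d$ the bound \eqref{eq:clip-bound} is not literally true (take $d=1$, $Z\equiv 5$: then $\mathcal{E}_{\text{clip}}=16>1=d\,\mathbb{P}[Z\notin\mathcal{X}]$). The paper silently assumes the overshoot is at most one unit per channel; your proposal to either impose $T_{\mathrm{MKL}}(\mathcal{X})\subseteq[-1,2]^d$ as a standing hypothesis or to replace $d$ by the squared diameter of the image box is the honest way to close the gap. So your write-up follows the paper's route but is more careful than the paper itself on this point.
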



In practice, color harmonization is a smooth process that maps adjacent colors to nearby ones, ensuring a small Lipschitz constant. This, combined with the fact that color distributions in natural images are rarely concentrated at the gamut boundaries, makes the clipping error negligible and justifies our linear approximation.

However, this is not the case when the harmonization task considers dark objects. Their color distribution is concentrated around a corner of $\mathcal{X}$. In our experiments we indeed observe that processing of too dark objects often leads to implausible results.

\section{Method}

\textbf{Ground-Truth Filter Generation} To verify whether our theoretical assumptions hold (i.e. a simple linear filter is powerful enough for image harmonization) we first have computed standard set of iHarmony metrics for so-called \textit{Ideal Linear OT filter}. This is not a predictive model, but rather a theoretical ceiling: for each image in the iHarmony dataset, we compute and store the exact MKL transform (Eq.~\ref{eq:mkl_transform}) that optimally maps the color distribution of the augmented unharmonized distribution to its ground-truth. This ideal linear filter achieves a Mean Squared Error (MSE) $\sim 7.0$. which is quite low compared to the state-of-the-art results for this benchmark, see Tab.~\ref{tab:metrics_mse}. High performance of linear MKL filters suggests that real harmonization maps are often Lipschitz and are mostly not concentrated near the gamut boundaries.
Therefore, we use these pre-computed vectors as the primary supervisory signal during training.

\textbf{Loss function} At this point, we have two options. The first one is to train the encoder network $\text{Model}(\cdot)$ to predict statistics $\mu_1, \Sigma_1$ of the target harmonized  distribution $\pi_{1}$. Since the input $\mu_0, \Sigma_0$ is known, the resulting filter is computed straightforwardly following Eq.~\ref{eq:mkl_transform}. The second option is to directly predict $A$ given by Eq.~\ref{eq:mkl_matrix} and shift $S$
\begin{equation}
    \label{eq:mkl_shift}
    S = \mu_1 - A \mu_0
\end{equation}

In our experiments we test both objectives for predicting tuples $[\mu_1, \Sigma_1]$ or $[A, S]$. We find that the latter one is more robust to the prediction inaccuracies and yields lower loss overall. 

\begin{equation}
    \label{eq:labels_loss}
     L_{labels} = || \text{Model}(im) - [A, S] ||_{1}
\end{equation}

In addition to this standard labels MSE loss, we found it beneficial to include the content L1 per-pixel loss for $[A', S']$ predicted by the model

\begin{equation}
    \label{eq:content_loss}
    L_{content} = || M * X_0 - M * (X_0 \cdot A' + S') ||_{1}
\end{equation}

where $M$ is binary mask, $X_0$ is flattened composite image pixels, $*$ denotes element-wise product and $\cdot$ is matrix product.

Since our problem is ill-posed, there is no unique MKL filter that satisfies the harmonization objective. Minimizing the expected $L_{2}$ error would therefore force the estimator to produce the unique arithmetic mean of all possible MKL solutions, which may not correspond to any actual filter. In contrast, we employ the $L_{1}$ loss, which is minimized by any valid solution and is, in a sense, less restrictive. The $L_{1}$ loss allows the network to converge to a sharper solution rather than the overly smoothed compromise imposed by $L_{2}$. Metrics for $L_{labels}$ with $L_{1}$ and $L_{2}$ losses are present in the Tab.~\ref{tab:metrics_mse}.

The question arises, why not to stick to the $L_{content}$ only? In practice, when training with just $L_{content}$, model learns filters that are close to identity transformation. The possible explanation of this behavior is that in the absence of MKL guiding signal the model experiences mode collapse similar to the case described in PCT-Net training, where authors introduce contrastive loss to avoid filter collapse. So our final loss takes the form:

\begin{equation}
    \label{eq:total_loss}
    L_{total} = L_{labels} + \alpha L_{content}
\end{equation}
We discuss parameters choice in the experiments section.

\begin{figure}[t]
    \centering
    \includegraphics[width=1\linewidth]{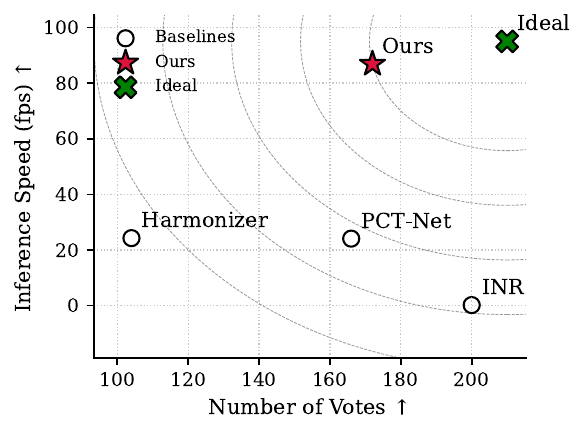}
    \caption{Mean opinion score versus inference speed calculated for images with 1080x2204 resolution from ARCore data. Data was processed on NVIDIA RTX 4060Ti GPU.}
    \label{fig:votes_vs_speed}
\end{figure}

\section{Experiments and Metrics}

\textbf{Datasets and Metrics} We train our model on iHarmony4 dataset \cite{cong2020dovenet}, which contains synthesized composite images, foreground masks of composite images and corresponding real images.

For evaluation we use approach, standard for the color harmonization area: mean squared error (MSE), peak signal-to-noise ratio (PSNR) and foreground MSE (fMSE) calculated on the iHarmony test set in 256x256 resolution.

However, since we initially aimed to study harmonization in augmented reality, the central place in our evaluation is taken by AR-specific data.

\begin{figure}
    \centering
    \includegraphics[width=1.0\linewidth]{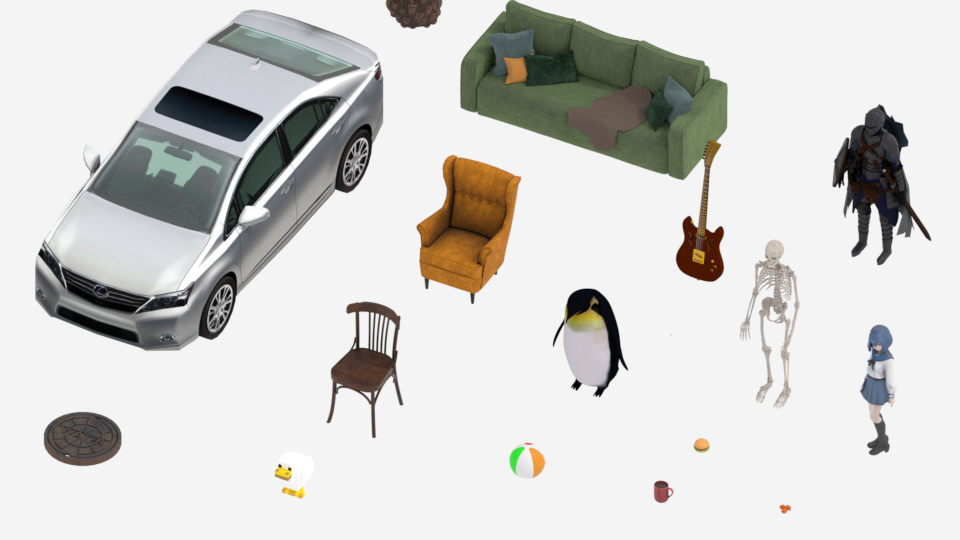}
    \caption{The 3D objects used in our experiments feature different sizes and textures.}
    \label{fig:3d_objects}
\end{figure}

\textbf{ARCore Evaluation Set} AR images are real composite images (i.e. produced without augmentation and thus they have no ground truth), making the standard metrics inapplicable. Moreover, AR data is scarce by itself. While one can find many options with fully rendered scenes like HVIDIT \cite{guo2021intrinsic} or Hypersim \cite{roberts2021Hypersim}, there are no collections of relevant AR images with per-pixel masks available in the open source. 

For this reason, we modify sample ARCore \footnote{AR platform for Android developed and supported by Google} application to turn it into basic data-gathering tool and collect the first small-scaled dataset of this kind. Currently the set contains 327 pairs of composite-mask images, captured in the wild, as shown in Fig.~\ref{fig:pair_of_balls}. It features various indoor and outdoor scenes, different day time, weather and lighting conditions. 3D objects used in our experiments are collected from open source and converted into the proper format. Selected object does not follow the common style and varies in sizes in textures as shown in Fig.~\ref{fig:3d_objects}. Please refer to the Supplementary for model credits and licenses. The rendering pipeline of application we used for data accumulation and model testing is depicted in Fig.~\ref{fig:app_scheme}. Crucially, all masks are obtained directly from the rendering engine.

\textbf{Baselines} We compare our method against three harmonization models described in the introduction section: Harmonizer \cite{ke2022harmonizer}, PCT-Net \cite{guerreiro2023pct},  INR-Harmonization \cite{chen2023dense}. Also, for iHarmony4 dataset, we include a comparison with classical color transfer between foreground and background pixels \cite{reinhard2001color}. 
All baseline models are evaluated using  MSE, PSNR, and fMSE metrics, results are shown in Tab.~\ref{tab:metrics_mse}.

\textbf{User Study on ARCore Data} For the user study, participants were shown the results of four harmonization methods -- Harmonizer, PCT-Net, INR-Harmonization and our MKL encoder -- applied to the same composite AR image and asked the question: ``Which image is more natural and realistic?". Each image set was shuffled. We gathered 20 participants, each of whom graded around 30 sets of these 4-way comparisons as demonstrated in Fig.~\ref{fig:human_study}, resulting in total 642 grades (see Fig.~\ref{fig:method_votes})

\begin{figure}[t]
    \centering
    \includegraphics[width=1.0\linewidth]{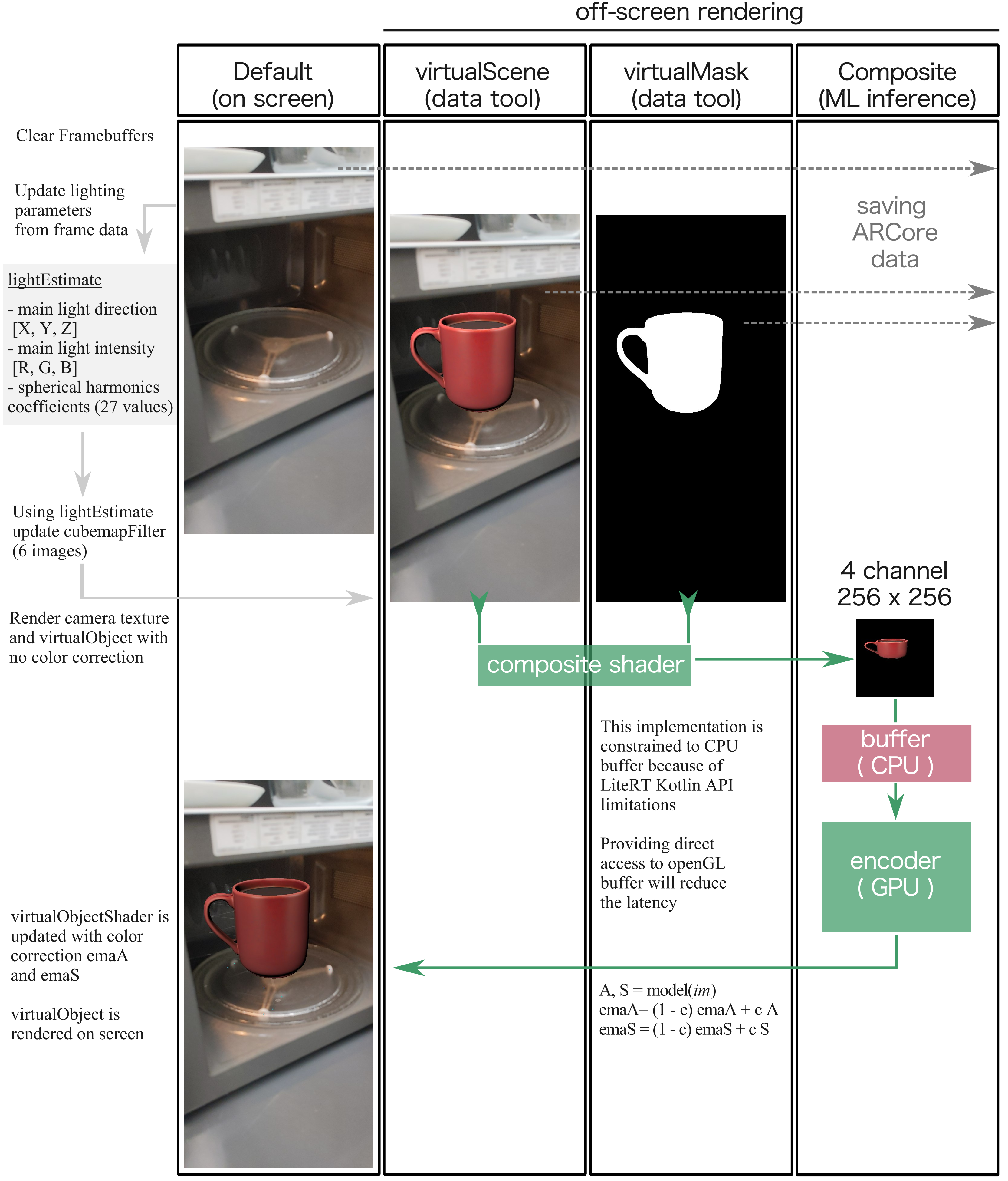}
    \caption{The general scheme of main rendering loop demonstrates four openGL Framebuffers. Default one performs on-screen rendering, others are used for off-screen rendering and auxiliary tasks, such as data collection. Limitations of LiteRT Next Kotlin introduces computational overhead due to copying model input through the CPU. Masks are rendered concurrently with foreground objects, obtained directly from the rendering engine and are saved on user's demand.}
    \label{fig:app_scheme}
\end{figure}

\textbf{Encoder} We use an EfficientNet-B0 \cite{tan2019efficientnet}, also used in PCT-Net and Harmonizer, chosen for its efficiency on mobile hardware. We modify its first convolutional layer to accept a 4-channel input (RGB + mask), allowing it to process both color and spatial information simultaneously. The network is trained on $256 \times 256$ inputs and its final regression head outputs a 12-dimensional vector corresponding to the parameters of the approximate MKL filter.

\textbf{Training} The model is trained for 210 epochs on an NVIDIA RTX 4090 using the Adam optimizer \cite{kingma2014adam} with a learning rate starting at $1\mathrm{e}{-3}$ and decreasing at 50 epochs to $1\mathrm{e}{-4}$ and at 110 epochs to $5\mathrm{e}{-5}$ with a batch size of 64. We employ a hybrid loss function to ensure perceptual quality:
\begin{equation}
    L_{total} = L_{labels} + \alpha \cdot L_{content}
\end{equation}
We set the content loss weight $\alpha=10$. This hybrid loss is critical for preventing the model from collapsing to a simple identity transformation as discussed in the Method section.

\textbf{Perceptual Evaluation Results} Given that image harmonization is an ill-posed problem, and in light of the exposure bias, we argue that perceptual evaluation by human observers is more insightful. 

The results of our user study, summarized in Fig.~\ref{fig:method_votes}, reveal two key insights. First, our method is rated perceptually on par with the leading baselines on real AR data. It proves that for pixel-perfect masks, where edges inconsistency is not an issue, simpler solution can be effective. 

Second, this exposes the weakness of relying on MSE based metrics; while the INR model has notably lower MSE score compared with PCT-Net, human observers rate its perceptual quality much higher, making MOS a more reliable metric. We suggest that the difference between the human scores and the metric in Tab.~\ref{tab:metrics_mse} may arise due to the training-inference mismatch discussed in the background section. The qualitative comparison is given in Fig.\ref{fig:qualitative_comparsion}.

Furthermore, Tab.~\ref{tab:metrics_its} and Fig.~\ref{fig:votes_vs_speed} illustrate the essential trade-off between perceptual quality (MOS) and inference speed. Our method offers both the high perceptual score and the fastest performance.

\textbf{Inference on an Edge Device} We evaluate the on-device inference performance of our method using the Google Pixel 4a and Google Pixel 7. We handle model conversion via the LiteRT Next Kotlin API and run the model on each frame of the rendering loop. Frame rate performance is measured on devices before and after turning on online harmonization, resulting in 12 to 15 fps, see Fig.~\ref{fig:frame_rate}. However, our tested implementation makes two unnecessary passes thought CPU buffers (see Fig.~\ref{fig:app_scheme}). Implementing zero-copy routines could potentially double this framerate to 24 - 30 fps range \cite{litert2025}.

\begin{table}
\caption{Quantitative results on the iHarmony4 256x256 dataset. Ideal Linear OT filter represents reference values calculated based on ground true images. Even though these values are not achievable in practice, they prove linear filters may be capable enough.}
\label{tab:metrics_mse}
\begin{tabular}{lccc}
\toprule
Method & MSE & PSNR & fMSE \\
\midrule
Ideal Linear OT & 7.6 ± 0.2 & 43.6 ± 0.1 & 45.9 ± 0.9 \\
\midrule
PCT-Net & 29.1 ± 0.9 & 38.0 ± 0.1 & 201 ± 4 \\
Harmonizer & 40.1 ± 1.2 & 36.6 ± 0.1 & 258 ± 5 \\
Ours $L_{1}$  & 65.0 ± 1.6 & 34.1 ± 0.1 & 438 ± 7 \\
Ours $L_{2}$ & 66.3 ± 1.7 & 33.9 ± 0.1 & 451 ± 7 \\
INR  & 67.2 ± 1.8 & 35.3 ± 0.1 & 392 ± 7 \\
CT & 284 ± 6.9 & 27.5 ± 0.1 & 1836 ± 23\\
\midrule
Unharmonized & 182 ± 5 & 31 ± 0.1 & 984 ± 17 \\
\bottomrule
\end{tabular}
\end{table}

\begin{table}[t]
\centering
\caption{Performance test on different resolutions measured in iterations per second. Experiment was carried out on RTX 4060Ti.}
\label{tab:metrics_its}
\resizebox{\columnwidth}{!}{%
\begin{tabular}{lcccc}
\toprule
Method & 256$\times$256 & 512$\times$512 & 1024$\times$2048 & 4096$\times$4096 \\
\midrule
Ours         & 175.01 & 166.76 & 137.21 &  40.85 \\
DoveNet      & 123.39 &   –    &   –    &   –    \\
PCT-Net      & 104.57 &  98.65 &  63.74 &  11.84 \\
Harmonizer   &  95.01 &  89.82 &  47.63 &   7.45 \\
INR          &   6.35 &   3.22 &   0.81 &   0.12 \\
\bottomrule
\end{tabular}%
}
\end{table}

\begin{figure}[ht]
    \centering
    \includegraphics[width=0.75\linewidth]{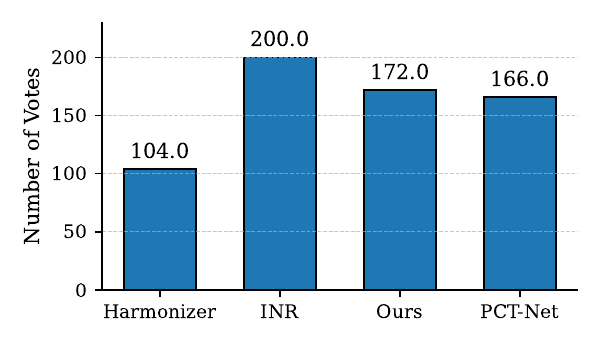}
    \caption{Mean opinion score on ARCore data. Results discussed in section ARCore Dataset and user Study.
    }
    \label{fig:method_votes}
\end{figure}

\begin{figure}[ht!]
    \centering
    \includegraphics[width=1\linewidth]{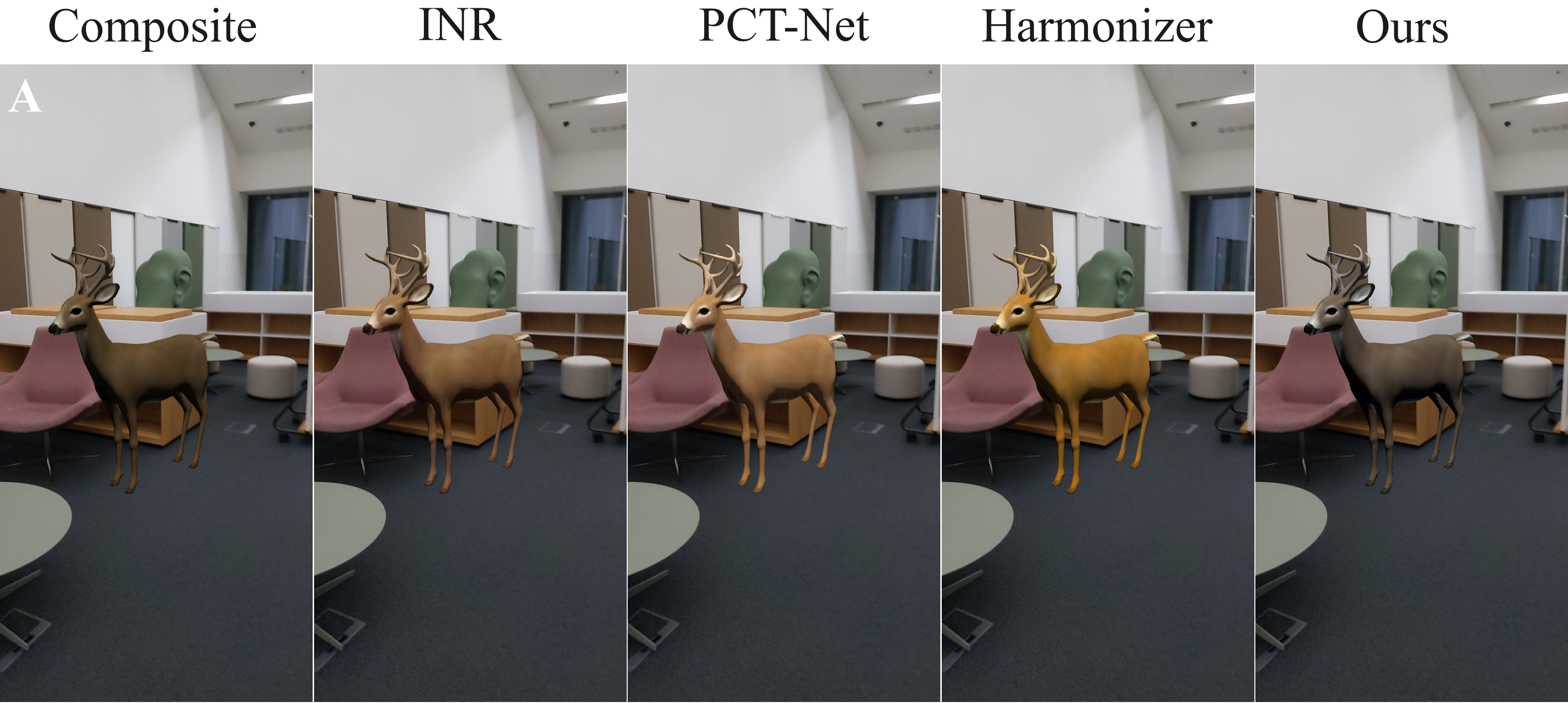}
    \includegraphics[width=1\linewidth]{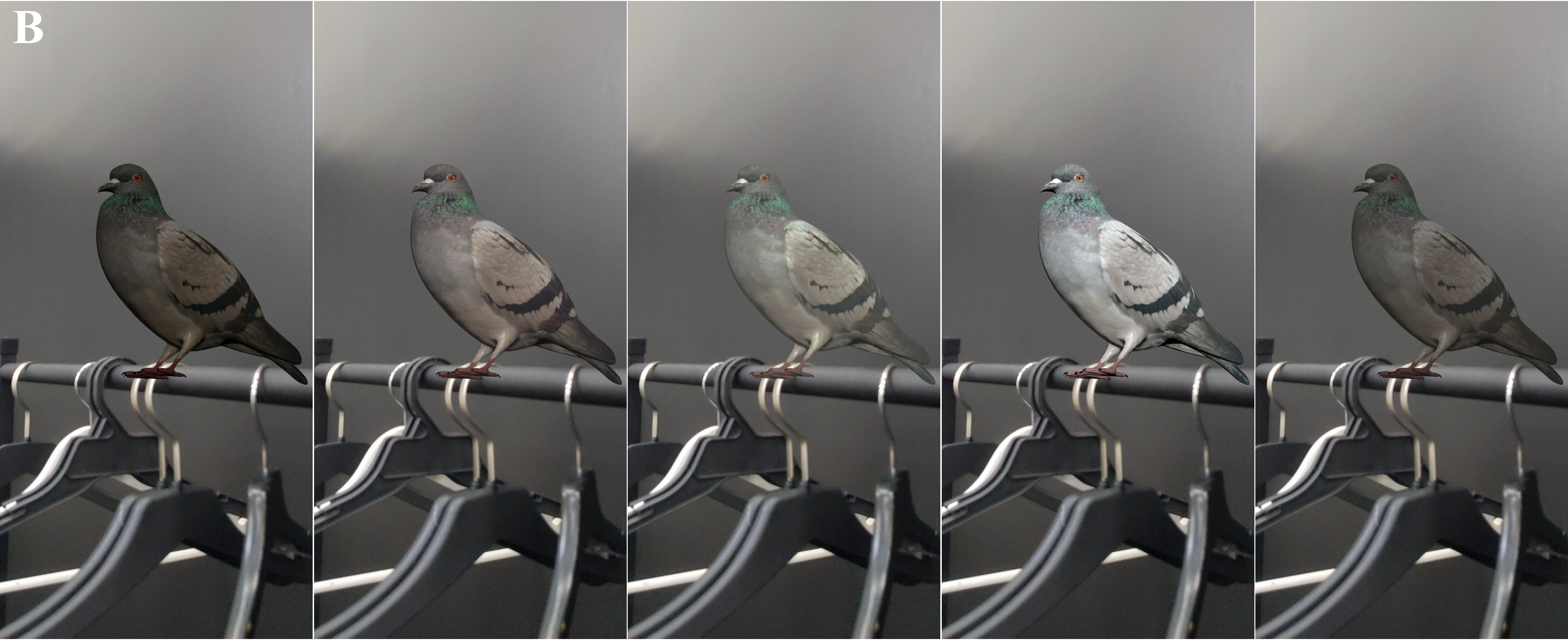}
    \includegraphics[width=1\linewidth]{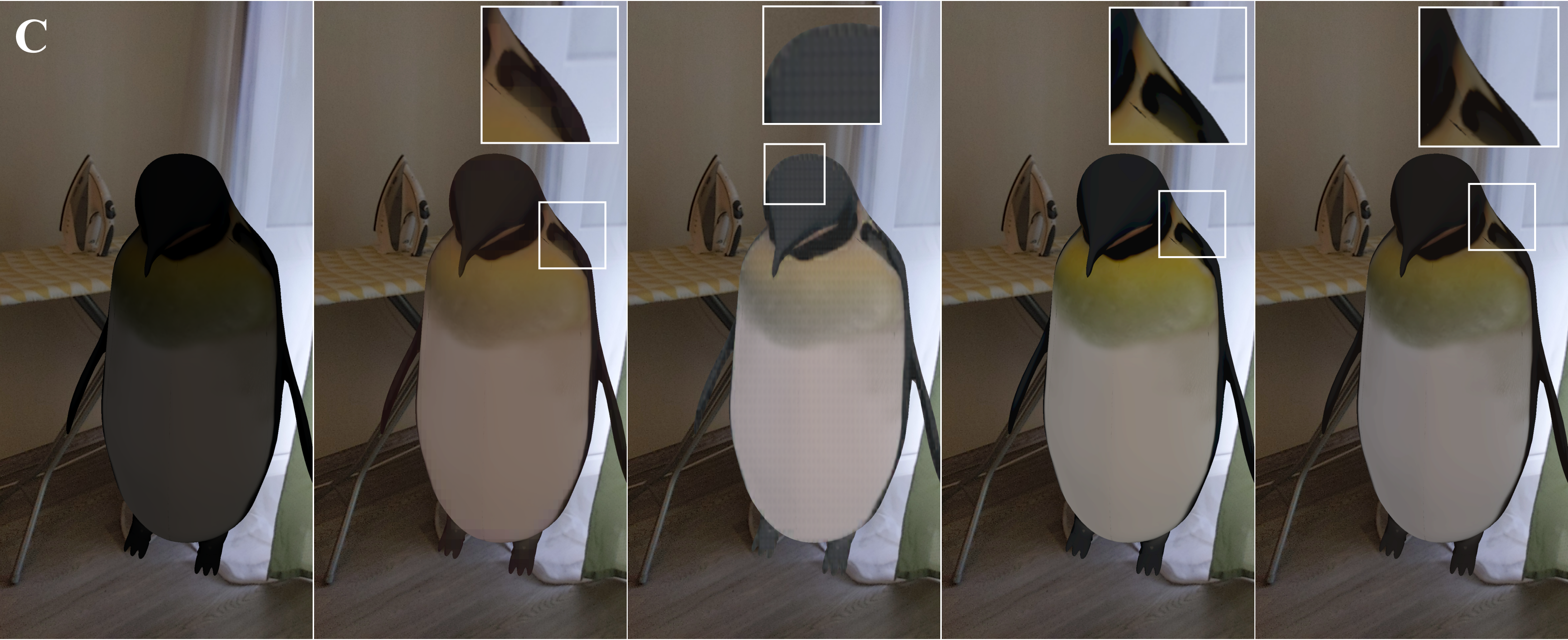}
    \includegraphics[width=1\linewidth]{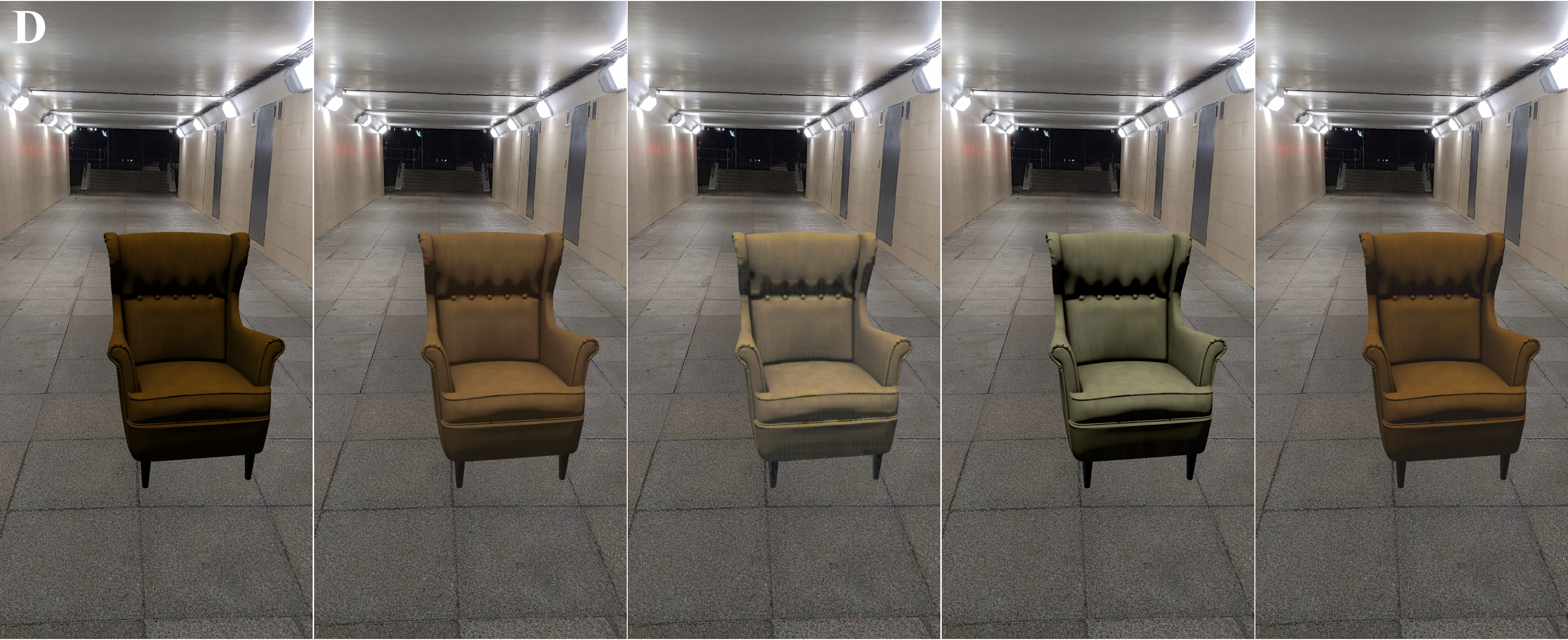} 
    \includegraphics[width=1\linewidth]{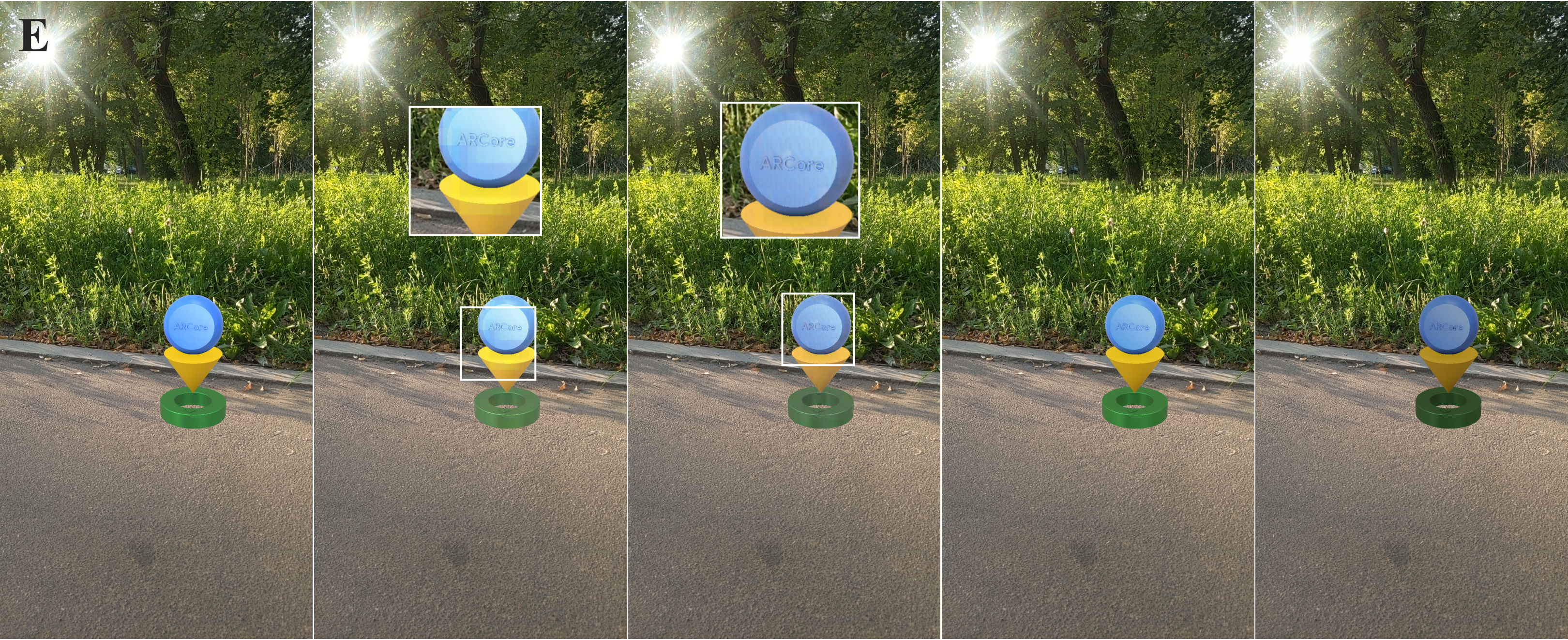}
    \includegraphics[width=1\linewidth]{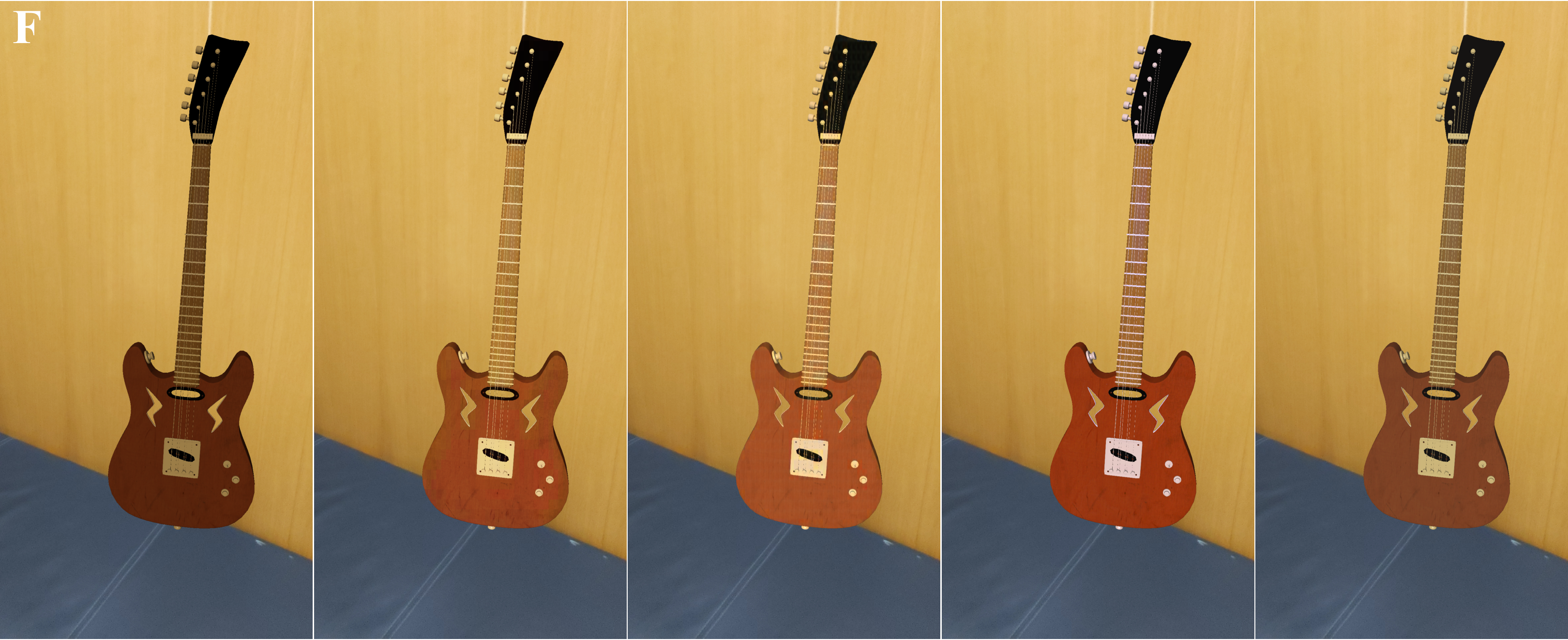}

    \caption{Qualitative comparison with baselines.}
    \label{fig:qualitative_comparsion}
\end{figure}

\begin{figure}[h!]
    \centering
    \includegraphics[width=1\linewidth]{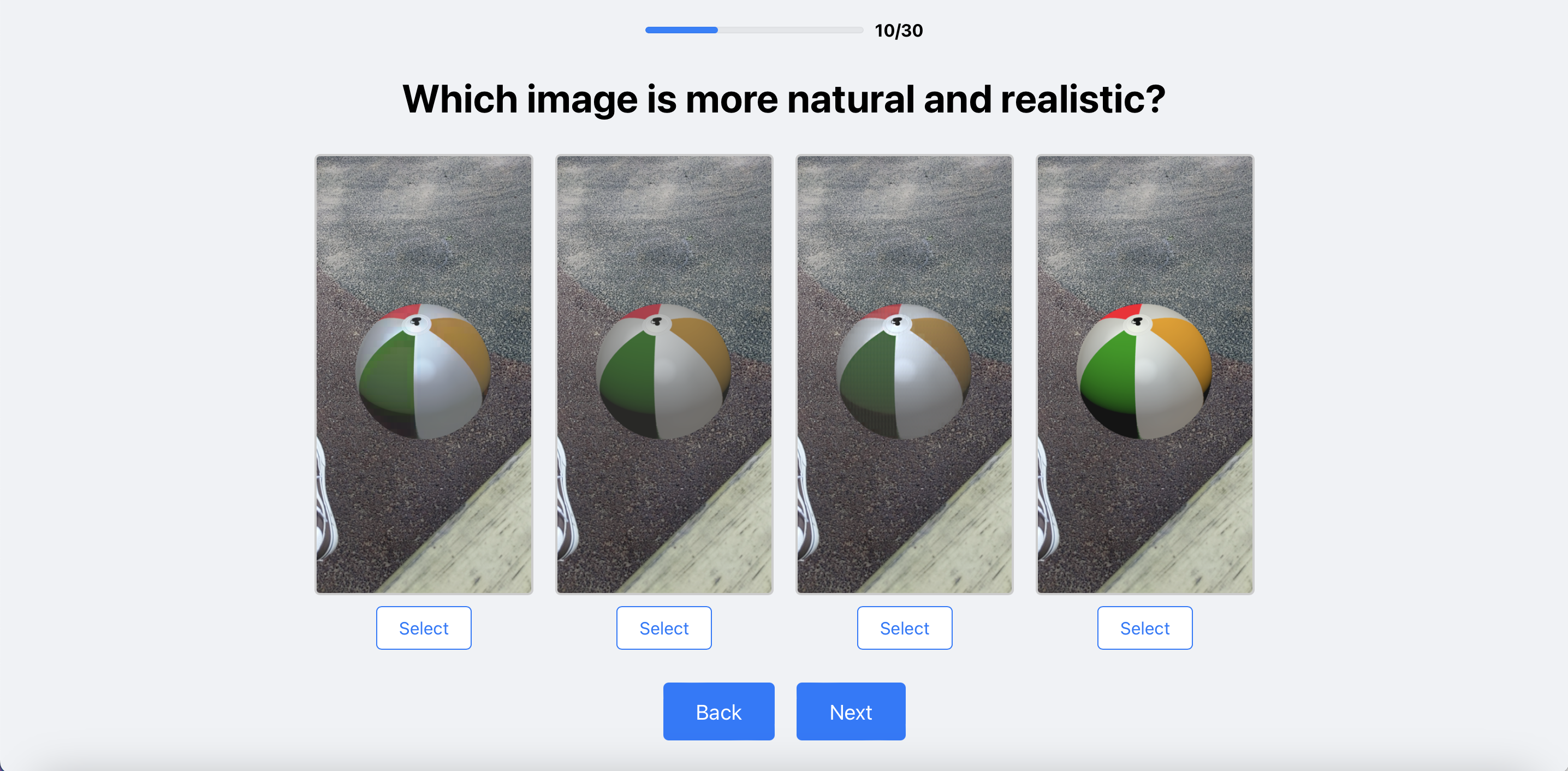}
    \caption{User interface of the labeling page.}
    \label{fig:human_study}
\end{figure}

\begin{figure}[ht!]
    \centering
    \includegraphics[width=0.75\linewidth]{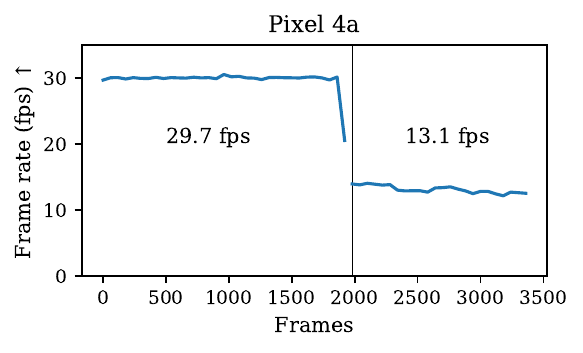}
    \includegraphics[width=0.75\linewidth]{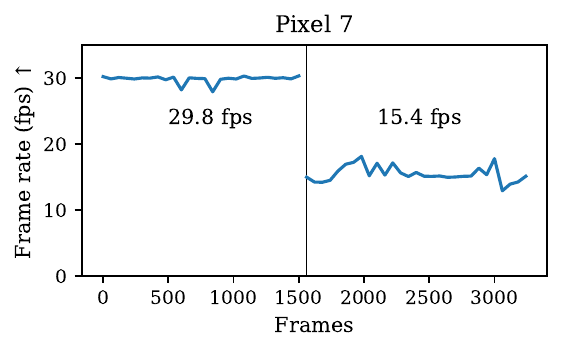}
    \caption{Frame rate profiles from the device before and after harmonization is turned on.}
    \label{fig:frame_rate}
\end{figure}

\subsection{Discussion and Limitations}
For high-resolution images, we observe that dense prediction models often produce corrupted results, leading to coarse or degraded outputs. In contrast, Harmonizer and our method do not exhibit these defects, as they are filter-based methods and thus avoid upscaling-related artifacts.
In Fig.~\ref{fig:qualitative_comparsion}C and E, one can observe coarse stripes in PCT-Net and pixelated areas resembling JPEG artifacts in INR. Notably, these issues are not present in low-resolution settings. However, all models exhibit certain biases in their predictions—for instance, Harmonizer tends to increase image brightness, while our predictions tend to be darker.
Another limitation of our method is that it is not designed as a final solution for video harmonization. Since our model is not trained on video data, sequential predictions may vary significantly across frames. To mitigate this, we apply an exponential moving average, as illustrated in Fig.~\ref{fig:app_scheme}. However, video harmonization remains a challenging problem overall.

\section{Conclusion}
In this work, we introduced a lightweight and efficient method for color harmonization tailored for real-time augmented reality applications. We framed the harmonization task as an optimal transport problem and grounded our method in classical OT theory by training an encoder to predict the parameters of a Monge-Kantorovich Linear filter. Furthermore, we provided a theoretical analysis that justifies the use of a linear map by bounding its approximation error, showing it is effective for the smooth color transforms typical of harmonization.

Our central contribution is not only the model itself but also a pioneering analysis of AR-specific harmonization challenges. We critically evaluated how state-of-the-art methods are performed and identified a possible ``exposure bias'' in the standard iHarmony4 dataset, which may cause metrics like MSE to misrepresent perceptual quality. To address this, we created and introduced the ARCore dataset, a new benchmark with pixel-perfect masks for realistic AR evaluation.

\bibliography{aaai2026}

\clearpage
\onecolumn
\makeatletter
\twocolumn[
\begin{@twocolumnfalse}
\section*{Supplementary Material for:\\ Lightweight Optimal-Transport Harmonization on Edge Devices}
\vspace{2.5em}
\end{@twocolumnfalse}
]
\makeatother
\section{Experiments and Metrics}
\textbf{iHarmony4 Dataset Preprocessing} During the metrics calculation, we observed that iHarmony4 composite images exhibit high-frequency artifacts, as illustrated by the difference image in Figure \ref{fig:iharmony_artifacts}. According to the dataset generation procedure, unmasked regions of a composite image should be exactly the same as in the corresponding ground-truth image. However, we found substantial differences that affect the MSE and PSNR metrics. To exclude this noise, we re-assembled the iHarmony4 dataset by replacing the pixels from unmasked regions of composites with pixels taken from their ground-truth images. The resulting artifact-free set is saved in PNG format and named `iHarmony4-clean`. We evaluate all baselines on this cleaned dataset to ensure a fair comparison.

\begin{figure}[h]
    \centering
    \includegraphics[width=0.7\linewidth]{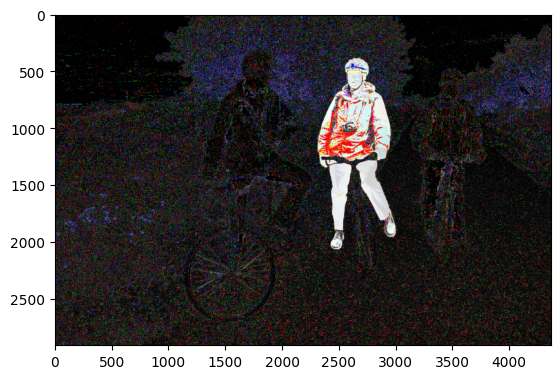}
    \caption{High-frequency artifacts are visible in a difference image between an original iHarmony4 composite and its corresponding real image.}
    \label{fig:iharmony_artifacts}
\end{figure} 

Additional comparison with baselines on the ARCore data is given in Fig. \ref{fig:qualitative_comparsion_supp}, and for comparison with ground-truth, refer to Fig. \ref{fig:qualitative_comparsion_iharmony}.

\section{Theoretical Analysis}
This appendix provides the complete proofs of Lemma~1, Theorem~1 and Lemma~2 stated in Section~6 of the main paper. We adopt the same notation and Assumptions~1--2 introduced there. For convenience we restate each result below together with its proof.

\textbf{Model Assumptions and Preliminaries}
Let the color space be the unit cube $\mathcal{X} = [0,1]^3$. Let the unharmonized and target color distributions, $\pi_0$ and $\pi_1$, be probability measures supported on $\mathcal{X}$. We impose the following regularity conditions for our analysis.
\begin{description}
    \item[Assumption 1 (Map Regularity).] The true optimal transport map, $T^*: \mathcal{X} \to \mathcal{X}$, exists and is $L$-Lipschitz continuous for a constant $L < \infty$. The existence of such a map for a general case is non-trivial and is guaranteed under strong conditions on the measures, such as uniform log-concavity of their densities \cite{caffarelli1992regularity}. 
    \item[Assumption 2 (Distribution Regularity).] The source distribution $\pi_0$ has mean $\mu_0$ and a non-singular covariance matrix $\Sigma_0$, which means that $\Sigma_0^{1/2}$ needed for the MKL map is well-defined. All expectations $\mathbb{E}[\cdot]$ are taken with respect to $X_0 \sim \pi_0$.
\end{description}
Our algorithm approximates $T^*$ with the MKL map, $T_{\mathrm{MKL}}(x) = \mu_1 + A(x-\mu_0)$, which is the optimal map for transporting between Gaussian surrogates $\mathcal{N}(\mu_0, \Sigma_0)$ and $\mathcal{N}(\mu_1, \Sigma_1)$ \cite{peyre2019computational}. Since the MKL map could potentially map part of the density outside of the unit cube, we apply clipping operation $\hat{T}_{\mathrm{MKL}} := \Pi_{\mathcal{X}} \circ T_{\mathrm{MKL}}$, where $\Pi_{\mathcal{X}}$ is the  Euclidean projection onto $\mathcal{X}$, i.e. clipping transformed colors to the valid $[0,1]^3$ gamut. We proof that the clipping operation coincides with the Euclidean projection in Lemma \ref{lem:clip-is-proj}.


\begin{lemma}[Clipping equals Euclidean projection]\label{lem:clip-is-proj-supp}
Let the \emph{clipping operator} $\clip:\mathbb{R}^{d}\!\to\![0,1]^{d}$ be defined component-wise by
\begin{equation}
(\clip(z))_j
   = \min\{1,\max\{0,z_j\}\},
   \qquad j=1,\dots,d.
\end{equation}
Then for every $z\in\mathbb{R}^{d}$ the vector $y=\clip(z)$ is the unique Euclidean projection of $z$ onto the cube $\mathcal{X}=[0,1]^d$; that is,
$\clip(z)=\Pi_{\mathcal{X}}(z)$.
\end{lemma}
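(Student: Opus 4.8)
The plan is to exploit the separability of both the objective and the constraint set. The squared Euclidean distance $\|z-y\|^2 = \sum_{j=1}^d (z_j - y_j)^2$ is a sum of one-dimensional terms, and the cube $\mathcal{X}=[0,1]^d$ is a Cartesian product $\prod_{j=1}^d [0,1]$ of one-dimensional intervals. Hence the constrained minimization $\min_{y \in \mathcal{X}} \|z-y\|^2$ decouples completely into $d$ independent scalar problems $\min_{y_j \in [0,1]} (z_j - y_j)^2$, and it suffices to solve each of them. So first I would state this decoupling explicitly, reducing the $d$-dimensional claim to the one-dimensional case.

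Next I would solve the scalar problem $\min_{t \in [0,1]} (a-t)^2$ for fixed $a \in \mathbb{R}$. The function $t \mapsto (a-t)^2$ is strictly convex on $\mathbb{R}$ with unique unconstrained minimizer $t=a$; restricting a strictly convex function to a closed interval yields a unique minimizer, which is the point of $[0,1]$ closest to $a$. A short case analysis finishes it: if $a<0$ the function is increasing on $[0,1]$ so the minimizer is $0$; if $a>1$ it is decreasing on $[0,1]$ so the minimizer is $1$; if $a\in[0,1]$ the minimizer is $a$ itself. In all three cases this is exactly $\min\{1,\max\{0,a\}\}$, i.e. the scalar clip. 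Uniqueness follows from strict convexity.

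Finally I would reassemble: since the minimizer in each coordinate is unique and equals $(\clip(z))_j$, the unique minimizer of $\|z-y\|^2$ over $\mathcal{X}$ is $\clip(z)$, which is by definition $\Pi_{\mathcal{X}}(z)$. Alternatively — and I would probably mention this as the cleaner route — one can invoke the standard variational characterization of the projection onto a closed convex set: $y=\Pi_{\mathcal{X}}(z)$ iff $y\in\mathcal{X}$ and $\langle z-y,\,x-y\rangle \le 0$ for all $x\in\mathcal{X}$. Checking this for $y=\clip(z)$ reduces, by separability again, to verifying $(z_j-y_j)(x_j-y_j)\le 0$ for each $j$ and all $x_j\in[0,1]$, which is immediate from the three cases above (when $y_j=0$ we have $z_j\le 0$ and $x_j\ge 0$; when $y_j=1$ we have $z_j\ge 1$ and $x_j\le 1$; when $y_j=z_j$ the factor $z_j-y_j$ vanishes).

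There is no real obstacle here; the only thing to be careful about is not to overcomplicate. The "hard" part is purely expository — making the reduction to one dimension clean and not belaboring the trivial scalar case — rather than any genuine mathematical difficulty.
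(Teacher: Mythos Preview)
Your proposal is correct and follows essentially the same approach as the paper: decouple the minimization via the Cartesian product structure of $\mathcal{X}$ and the additive separability of the squared norm, solve the scalar problem on $[0,1]$ by the three-case analysis, and conclude uniqueness from strict convexity. The only addition is your alternative route via the variational inequality $\langle z-y,\,x-y\rangle\le 0$, which the paper does not mention but which is a valid and equally elementary variant.
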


\begin{proof}
Consider the minimization problem
\begin{equation}
\min_{x\in\mathcal{X}} \|z-x\|_{2}^{2}.
\end{equation}
Because $\mathcal{X}$ is the Cartesian product of intervals and the
objective decomposes additively,
\begin{align}
\|z-x\|_{2}^{2}
  &= \sum_{j=1}^{d} (z_j-x_j)^{2}, \\
\arg\min_{x\in\mathcal{X}}\|z-x\|_{2}^{2}
  &= \bigl(\arg\min_{x_j\in[0,1]} (z_j-x_j)^{2}\bigr)_{j=1}^{d}. \label{eq:coord_sep}
\end{align}
For a single coordinate $z_j$ the scalar minimizer is
\begin{equation}
x_j^{\star}
  =
  \begin{cases}
    0, & z_j<0,\\
    z_j, & 0\le z_j \le 1,\\
    1, & z_j>1,
  \end{cases}
\end{equation}
which is exactly $(\clip(z))_j$.  Stacking these $x_j^{\star}$ yields
$y=\clip(z)$, and by \eqref{eq:coord_sep} this $y$ minimizes the full
problem.  Uniqueness follows from strict convexity of the squared norm,
hence $\clip(z)=\Pi_{\mathcal{X}}(z)$.
\end{proof}

Let us note that any projection operator is $1$-Lipschitz since the distance between any two points after projection is never greater than their distance before projection. We seek to bound the expected squared error $\mathcal{E} := \mathbb{E}[ \| \hat{T}_{\mathrm{MKL}}(X_0) - T^*(X_0) \|^2 ]$.

\begin{theorem}[Error Bound for L-Lipschitz Color Maps]
Let Assumptions 1 and 2 hold. The total error $\mathcal{E}$ is bounded as:
\begin{equation}
    \mathcal{E} \le 2\mathcal{E}_{clip} + 2\mathcal{E}_{lin},
\end{equation}
where the clipping error is $\mathcal{E}_{clip} := \mathbb{E} [ \| T_{MKL}(X_0) - \hat{T}_{MKL}(X_0) \|^2 ]$, and the linearity error, $\mathcal{E}_{lin}$, is bounded by:
\begin{equation}
    \mathcal{E}_{lin} \le 2 B^2 + 2 (\|A\|_{op} + L)^2 \cdot \mathrm{tr}(\Sigma_0).
\end{equation}
Here, $B = |\mu_1 - T^*(\mu_0)|$ is a bias term, $||A||_{op}$ is the spectral norm of the MKL matrix, i.e. its largest singular value, which depends only on source and target distribution covariances, and $\mathrm{tr}(\Sigma_0)$ is the trace of the source covariance matrix.
\end{theorem}
\begin{proof}
The decomposition $\mathcal{E} \le 2\mathcal{E}_{clip} + 2\mathcal{E}_{lin}$ follows from the inequality $\|u+v\|^2 \le 2\|u\|^2 + 2\|v\|^2$. We focus on bounding the linearity error term, $\mathcal{E}_{lin} = \mathbb{E} [ \| T_{MKL}(X_0) - T^*(X_0) \|^2 ]$. We introduce and subtract terms centered at the mean of the source distribution, $\mu_0$:
\begin{align*}
    &\| T_{MKL}(x) - T^*(x) \| \\ &= \| (\mu_1 + A(x-\mu_0)) - T^*(x) \| \\
    &= \| (\mu_1 - T^*(\mu_0)) + (A(x-\mu_0) - (T^*(x) - T^*(\mu_0))) \|.
\end{align*}
Applying the triangle inequality, $\|u+v\| \le \|u\| + \|v\|$, yields:
\begin{align*}
    &\| T_{MKL}(x) - T^*(x) \| \\ &\le \|\mu_1 - T^*(\mu_0)\| + \|A(x-\mu_0) - (T^*(x) - T^*(\mu_0))\|.
\end{align*}
Let $B = \|\mu_1 - T^*(\mu_0)\|$. For the second term, we again use the triangle inequality and the L-Lipschitz assumption on $T^*$:
\begin{align*}
    &\|A(x-\mu_0) - (T^*(x) - T^*(\mu_0))\| \\&\le  \|A(x-\mu_0)\| + \|T^*(x) - T^*(\mu_0)\| \\
    &\le \|A\|_{op}\|x-\mu_0\| + L\|x-\mu_0\| \\
    &= (\|A\|_{op} + L)\|x-\mu_0\|.
\end{align*}
Combining these yields the pointwise bound: $\| T_{MKL}(x) - T^*(x) \| \le B + (\|A\|_{op} + L)\|x-\mu_0\|$. To obtain the bound on $\mathcal{E}_{lin}$, we square this expression and take the expectation. Using $(u+v)^2 \le 2u^2 + 2v^2$:
\begin{align*}
    \mathcal{E}_{lin} &= \mathbb{E} [ \| T_{MKL}(X_0) - T^*(X_0) \|^2 ] \\
    &\le \mathbb{E} [ 2B^2 + 2(\|A\|_{op} + L)^2\|X_0-\mu_0\|^2 ].
\end{align*}
Since $\mathbb{E}[\|X_0-\mu_0\|^2] = \mathrm{tr}(\Sigma_0)$ by definition of covariance, the result follows.
\end{proof}

This theorem bounds the approximation error, linking it to the Lipschitz constant $L$ of the true optimal transport map and the tail probability $\mathcal{E}_{clip} \le d \cdot \mathbb{P}[T_{\mathrm{MKL}}(X_0) \notin \mathcal{X}]$. The latter bound holds due to the following Lemma \ref{lem:clip-tail}.

\begin{lemma}[Tail–probability bound for the clipping error]%
\label{lem:clip-tail-sup}
Let $\Pi_{\mathcal{X}}=\clip(\cdot)$ be the Euclidean projection onto
$\mathcal{X}=[0,1]^d$, defined as
\begin{align}
    (\clip(z))_j \;:=\; \min\bigl\{\,1,\;\max\{0,\;z_j\}\bigr\},
\end{align}
for $j=1,\dots,d,\;z\in\mathbb{R}^d$ and define
\begin{align}
\mathcal{E}_{\text{clip}}
\;:=\;
\mathbb{E}\!\bigl[\|Z-\Pi_{\mathcal{X}}Z\|^2\bigr],
\qquad Z\in\mathbb{R}^d.
\end{align}
Then
\begin{align}
\mathcal{E}_{\text{clip}}
  &\le
  d\,\mathbb{P}[Z\notin\mathcal{X}],
  \label{eq:clip-bound-sup}
\end{align}
and for our application with $Z=T_{\mathrm{MKL}}(X_0)$ and $d=3$,
\begin{align}
\mathcal{E}_{\text{clip}}
  &\le
    3\,\mathbb{P}\bigl[T_{\mathrm{MKL}}(X_0)\notin\mathcal{X}\bigr].
  \tag{\ref{eq:clip-bound-sup}$'$}
\end{align}
\end{lemma}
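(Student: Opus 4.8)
\emph{Proof strategy.} The plan is to reduce the statement to a one-dimensional estimate and to exploit that the clipping deviation is supported on the event $\{Z\notin\mathcal{X}\}$. First I would apply Lemma~\ref{lem:clip-is-proj-supp} to replace the Euclidean projection by the component-wise operator $\clip$, which makes the squared distance split across coordinates:
\begin{equation}
\|Z-\Pi_{\mathcal{X}}Z\|^{2}=\sum_{j=1}^{d}\bigl(z_{j}-\clip(z_{j})\bigr)^{2}.
\end{equation}
Because $z_{j}-\clip(z_{j})=0$ exactly when $z_{j}\in[0,1]$, the $j$-th term is nonzero only on $\{z_{j}\notin[0,1]\}$, an event contained in $\{Z\notin\mathcal{X}\}$; in particular $\|Z-\Pi_{\mathcal{X}}Z\|^{2}$ itself vanishes off $\{Z\notin\mathcal{X}\}$.

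Next I would bound each coordinate contribution by $1$. On $\{z_{j}\notin[0,1]\}$ the excess $z_{j}-\clip(z_{j})$ equals $z_{j}$ if $z_{j}<0$ and $z_{j}-1$ if $z_{j}>1$, and for the color maps of interest the MKL image never leaves the gamut by more than one unit, so $|z_{j}-\clip(z_{j})|\le 1$; combined with its vanishing inside $[0,1]$ this gives $(z_{j}-\clip(z_{j}))^{2}\le\mathbf{1}[z_{j}\notin[0,1]]$. Summing over $j$, and using that the number of violated coordinates is at most $d$ and is positive precisely when $Z\notin\mathcal{X}$,
\begin{equation}
\|Z-\Pi_{\mathcal{X}}Z\|^{2}\le\sum_{j=1}^{d}\mathbf{1}[z_{j}\notin[0,1]]\le d\,\mathbf{1}[Z\notin\mathcal{X}].
\end{equation}
Taking expectations yields $\mathcal{E}_{\text{clip}}\le d\,\mathbb{P}[Z\notin\mathcal{X}]$, which is \eqref{eq:clip-bound-sup}, and specializing to $d=3$ with $Z=T_{\mathrm{MKL}}(X_{0})$ gives the stated three-dimensional form.

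I expect the only genuinely delicate step to be the per-coordinate bound $(z_{j}-\clip(z_{j}))^{2}\le 1$: it relies on $Z$ overshooting the cube by at most one unit, i.e.\ $z_{j}\in[-1,2]$, which does not follow from Assumptions~1--2 by themselves and is best stated as a mild additional regularity condition (it is satisfied comfortably whenever $T_{\mathrm{MKL}}$ stays close to the identity, as is typical in harmonization; in general the constant $d$ must be multiplied by the square of the largest possible overshoot). The remaining ingredients --- the coordinatewise decomposition afforded by Lemma~\ref{lem:clip-is-proj-supp}, the support on $\{Z\notin\mathcal{X}\}$, and the indicator bookkeeping --- are entirely routine.
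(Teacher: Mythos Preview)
Your argument is essentially the paper's proof: decompose the squared projection distance coordinate-wise, bound each term by $\mathbf{1}[z_j\notin[0,1]]$, sum to obtain $d\,\mathbf{1}[Z\notin\mathcal{X}]$, and take expectations. The paper does exactly this in three lines.

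Your caveat about the per-coordinate bound is well placed and in fact sharper than what the paper writes. The paper simply asserts $|z_j-(\clip z)_j|\le 1$ ``because $\clip$ acts coordinate-wise,'' but that inequality is false for a generic $Z\in\mathbb{R}^d$ (take $z_j=5$). As you note, the inequality holds only under the additional hypothesis $Z\in[-1,2]^d$ almost surely, which is \emph{not} implied by Assumptions~1--2 and is not stated in the lemma. So the lemma as stated is not literally true; both proofs share this gap, but you are right to isolate it rather than pass over it.
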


\begin{proof}
Because $\clip$ acts coordinate-wise,
\begin{equation*}
    |z_j-(\clip z)_j|\le 1 \quad\text{for each } j.
\end{equation*}
Hence
\begin{align*}
    \|z-\clip z\|^2
  =\sum_{j=1}^{d}\bigl(z_j-(\clip z)_j\bigr)^2
  \le d\,\mathbf{1}_{\{z\notin\mathcal{X}\}},
\end{align*}
where $\mathbf{1}_{\{\cdot\}}$ is the indicator function.
Taking expectations with $z=Z$ yields \eqref{eq:clip-bound-sup}.
\end{proof}

\section{3D Model Credits}

\begin{figure}[h]
    \centering
    \includegraphics[width=0.7\linewidth]{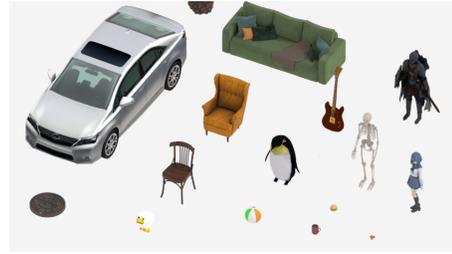}
    \caption{The 3D objects used in our experiments.}
    \label{fig:3d_objects_supp}
\end{figure} 

3D objects used in our experiments are collected from open source and converted to the Wavefront .obj format and single ``baked'' texture file. Some of them are shown in Fig.~\ref{fig:3d_objects_supp}. Models' authors and sources are listed below 

\begin{itemize}
    \item \textbf{AR Core pawn} \\
    \url{github.com/google-ar/arcore-android-sdk}

    \item \textbf{Chair} Author: River Yang, \\
    \url{https://www.blenderkit.com/asset-gallery-detail/703fe964-84bb-4468-84bd-f6aa3fb337b4/}
    
    \item \textbf{Chicken} 

    \item \textbf{Beach ball} Author: Mohammed Hossain, \\\url{https://www.blenderkit.com/asset-gallery-detail/cb647b58-d1c8-4a34-a6b8-263cd002bf30/}

    \item \textbf{Green sofa} Author: Joris LM, \\\url{https://www.blenderkit.com/asset-gallery-detail/573bf228-019b-4072-8086-bc45a6a2b2fa/}
 
    \item \textbf{IKEA armchair} Author: Branislav Kubečka,\\
    \url{https://www.blenderkit.com/asset-gallery-detail/82020230-9451-4683-bb6b-1e01c7c4fa36/}

    \item \textbf{Burger} Author: Raymond Gabriel, \\
    \url{https://www.blenderkit.com/asset-gallery-detail/5fe9d2e3-31ac-44d1-b392-cdb00ae1d490/}

    \item \textbf{Skeleton} Author: Aidan Sanderson,\\
    \url{https://www.blenderkit.com/asset-gallery-detail/130ec931-f4df-45b8-9a81-2660ccceb581/}

    \item \textbf{Knight's armor} Author: AnonmalyFound,\\ 
    \url{https://www.blenderkit.com/asset-gallery-detail/4d71b492-be49-4748-b159-6ec337aefa50/}

    \item \textbf{Car} Published by: WyattP, \\
    \url{https://open3dmodel.com/3d-models/lexus-free-3d-model-car_12127.html}

    \item \textbf{Tangerines} Author: Yahku le Roux,\\
    \url{https://www.blenderkit.com/asset-gallery-detail/fc4e728b-c25b-4059-bdc9-d211b4fadcf8/}

    \item \textbf{Manhole}  Author: Raunox, Poly Heaven \\
    \url{https://www.blenderkit.com/asset-gallery-detail/847ef18f-3f72-4893-b5c2-389a1898c21b/}

    \item \textbf{Giant fern} Author: Amanpreet Bajwa, \\
    \url{https://www.blenderkit.com/asset-gallery-detail/25d6a477-7a01-46b1-b5dd-d162fe2ab2fc/}

    \item \textbf{Penguin} Author: nabarun1011,\\
    \url{https://free3d.com/3d-model/emperor-penguin-601811.html},
    \url{https://sketchfab.com/3d-models/penguin-f65e799bf9534c66a12724a93bb72c39}

    \item \textbf{Coffee mug} Generated,\\ \url{https://www.meshy.ai/}

    \item \textbf{Anime girl} Generated,\\ \url{https://www.meshy.ai/}

    \item \textbf{Guitar} Author: nabarun1011,\\\url{https://sketchfab.com/3d-models/electric-guitar-d51983607d404791acb42f460259f23a}

    \item \textbf{Beagle dog} \\
    \url{https://www.cadnav.com/3d-models/model-55200.html}

    \item \textbf{Deer} Author: TheCaitasaurus, \\
    \url{https://open3dmodel.com/3d-models/little-deer-animal_352076.html}

    \item \textbf{Pigeon} Published by: Zoe\_A, \\
    \url{https://open3dmodel.com/3d-models/3d-model-rock-pigeon_119874.html}
\end{itemize}

\begin{figure}[h]
    \centering
    \includegraphics[width=1\linewidth]{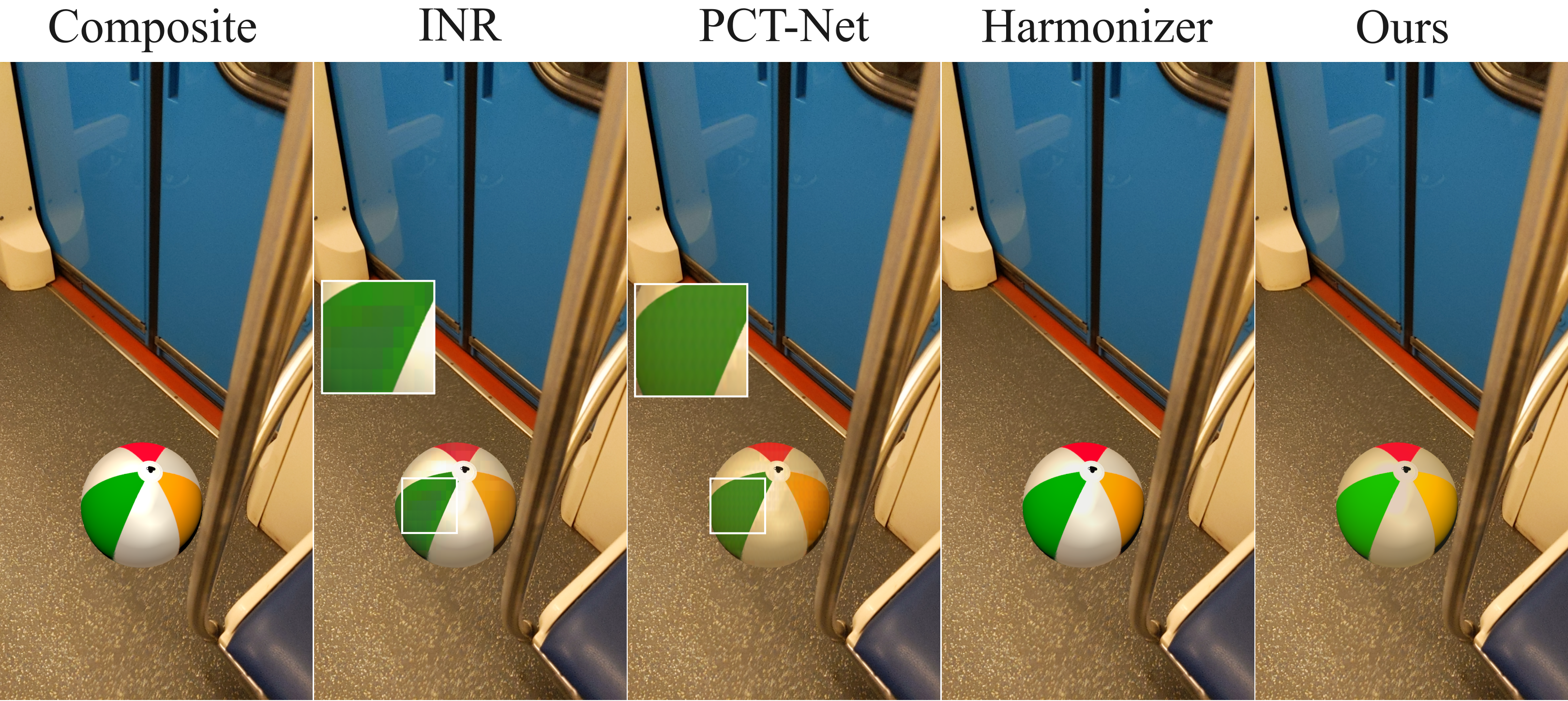} 
    \includegraphics[width=1\linewidth]{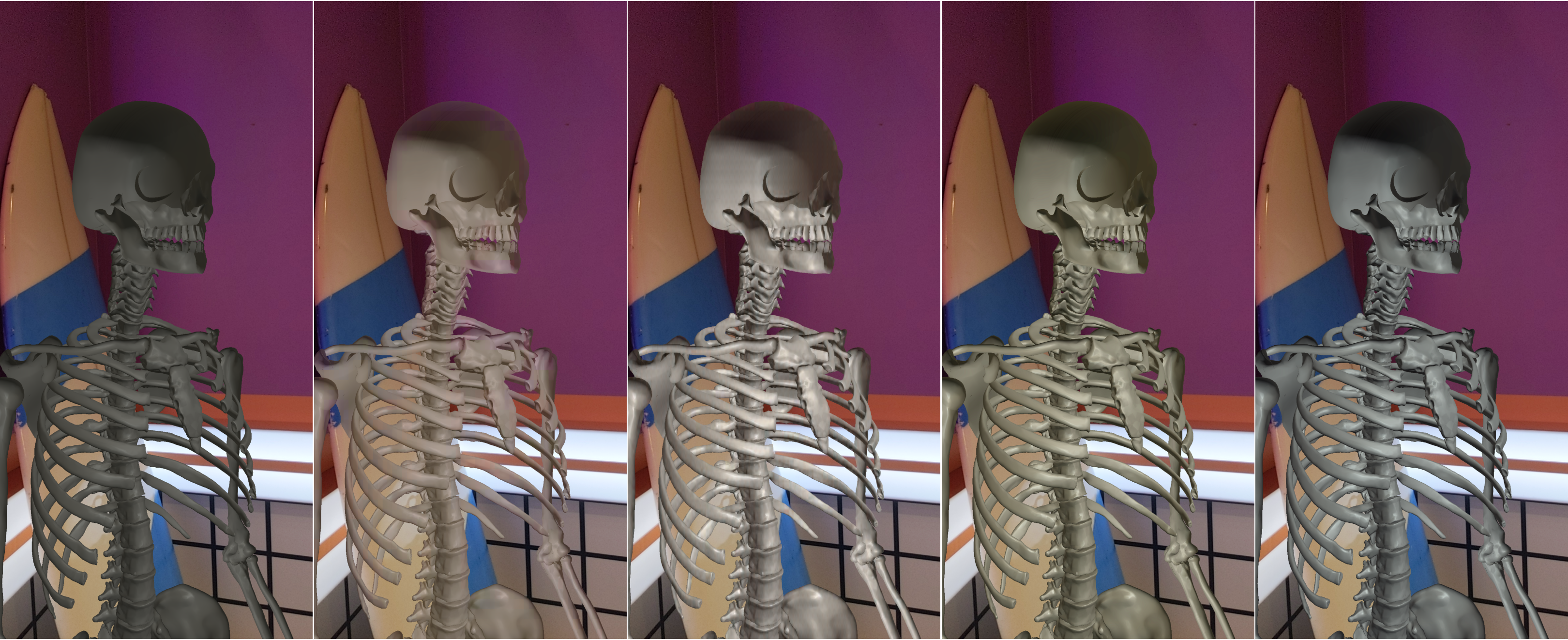} 
    \includegraphics[width=1\linewidth]{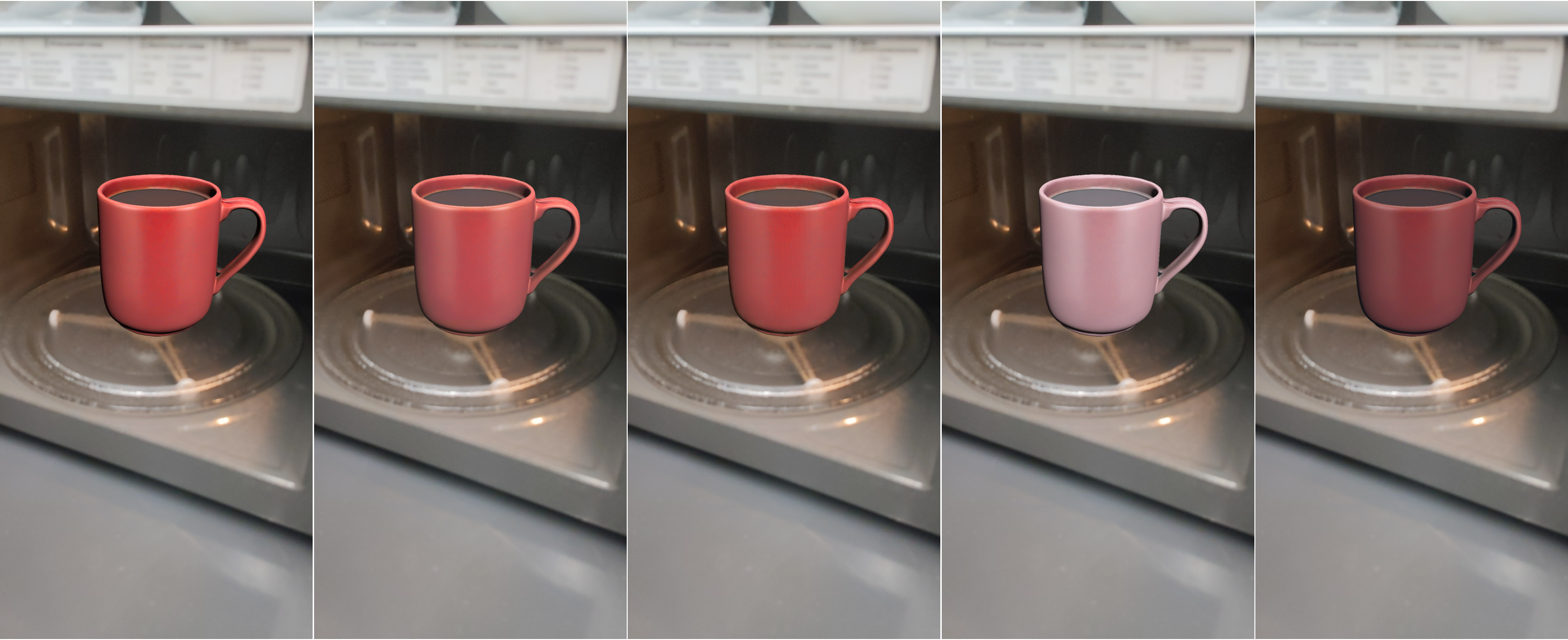} 
    \includegraphics[width=1\linewidth]{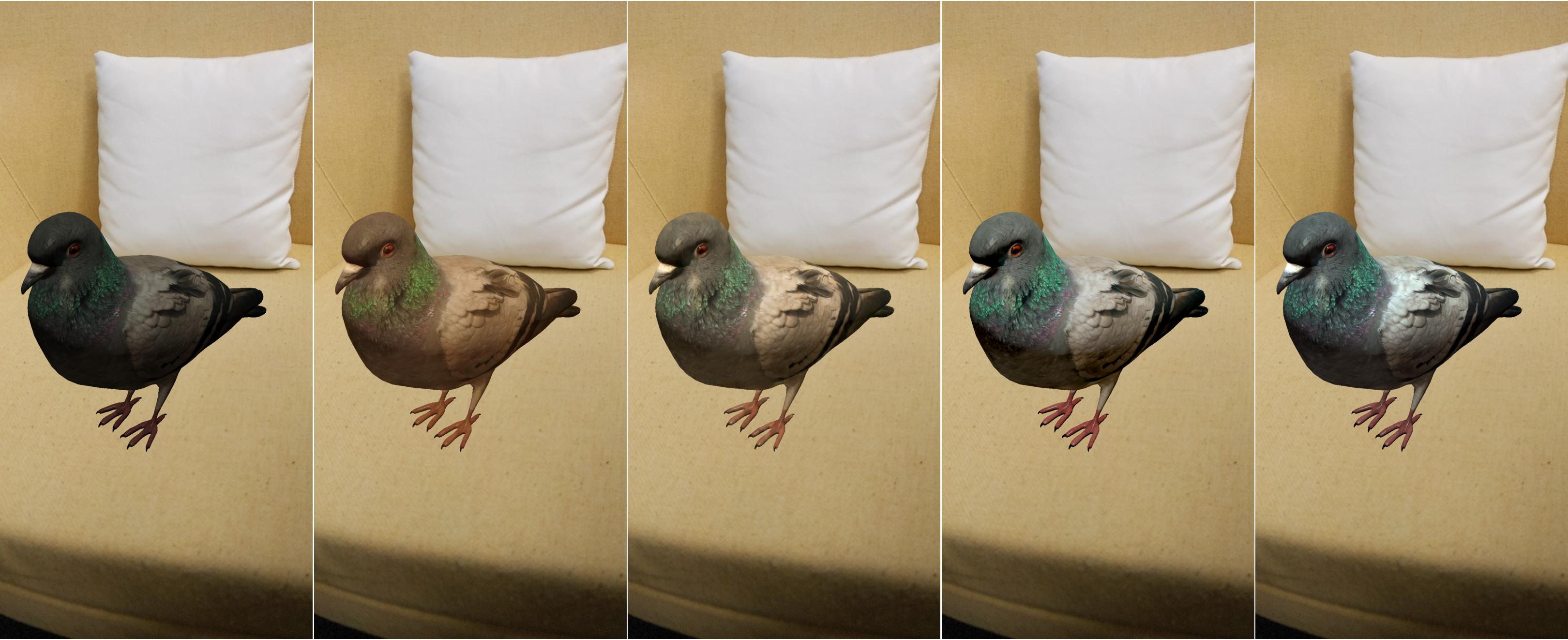}  
    \includegraphics[width=1\linewidth]{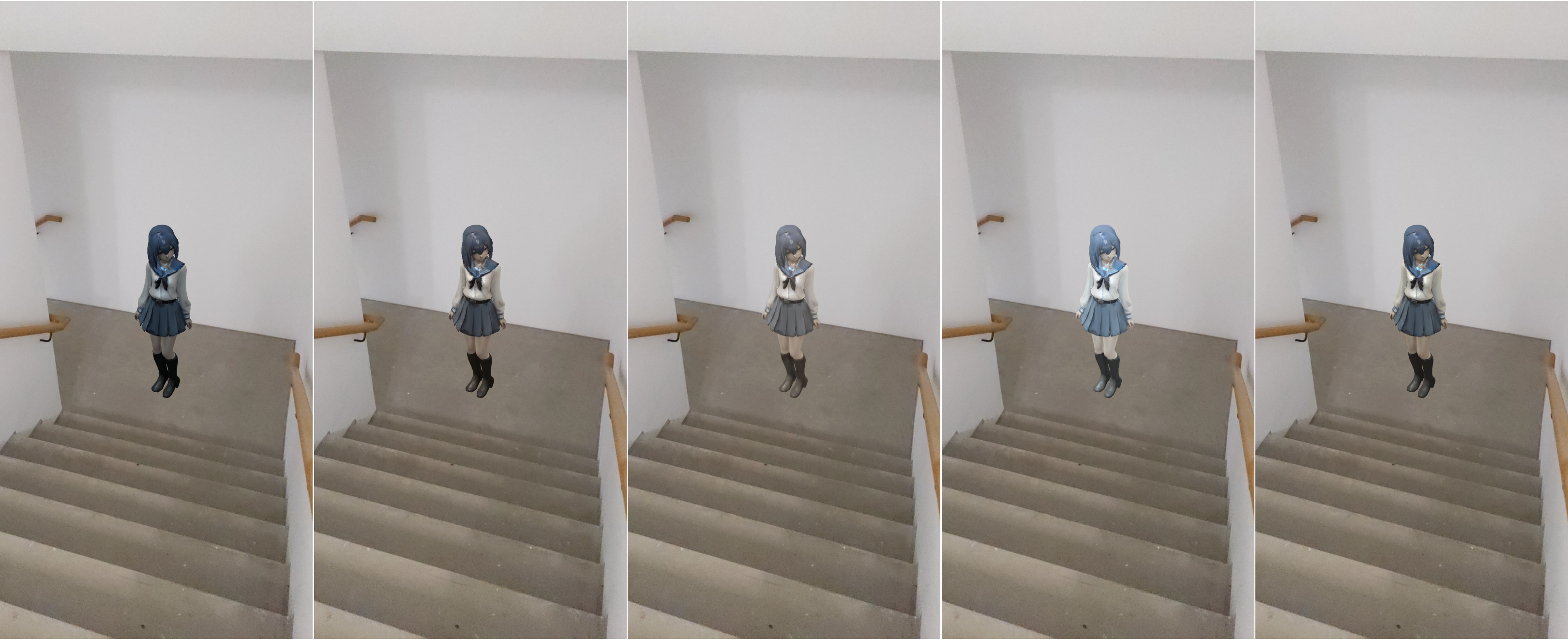}  
    \includegraphics[width=1\linewidth]{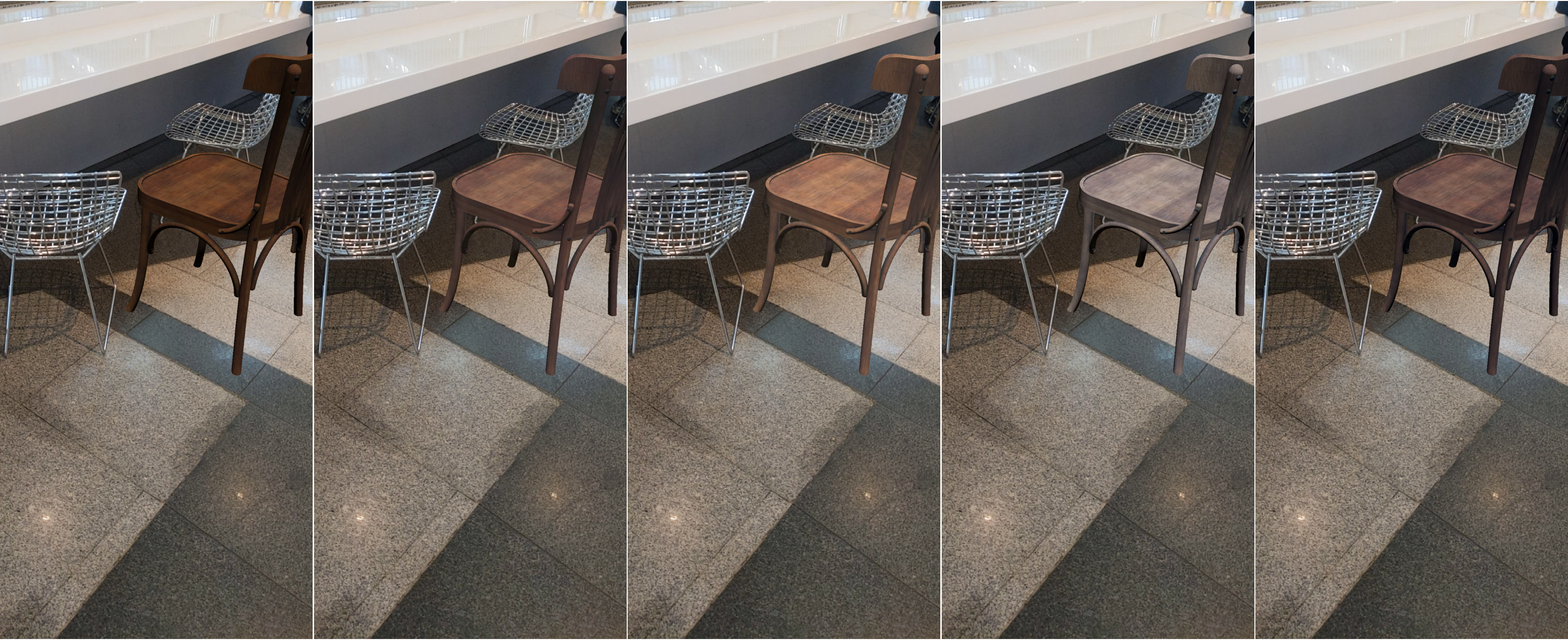}
    \caption{Qualitative comparison with baselines.}
    \label{fig:qualitative_comparsion_supp}
\end{figure}

\begin{figure*}[t]
    \centering
    \includegraphics[width=0.75\linewidth]{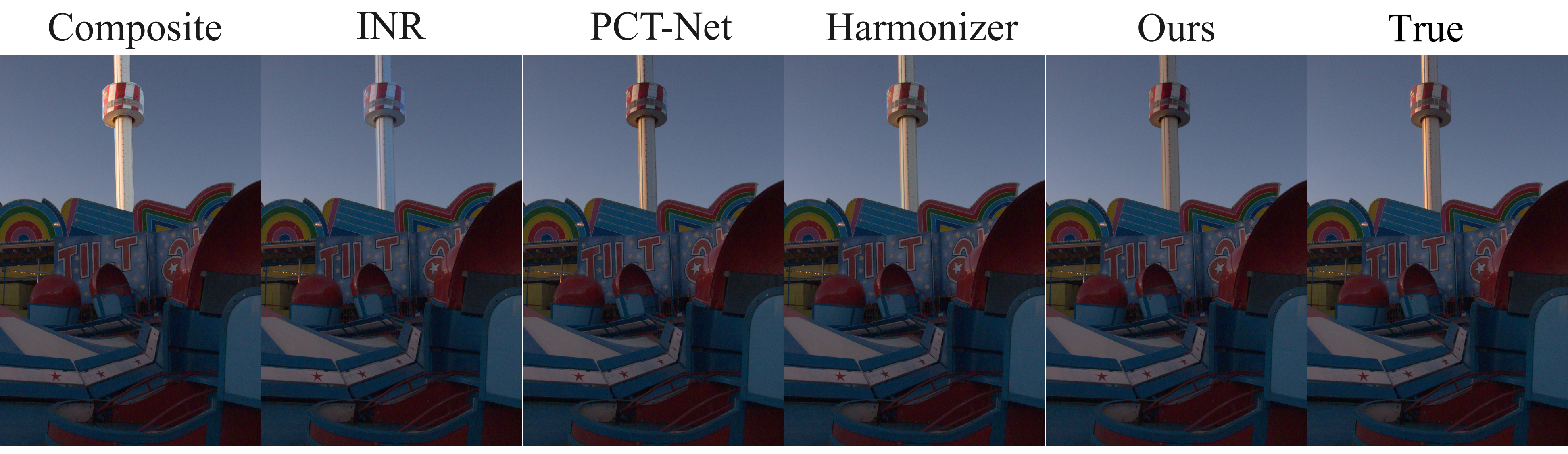}
    \includegraphics[width=0.75\linewidth]{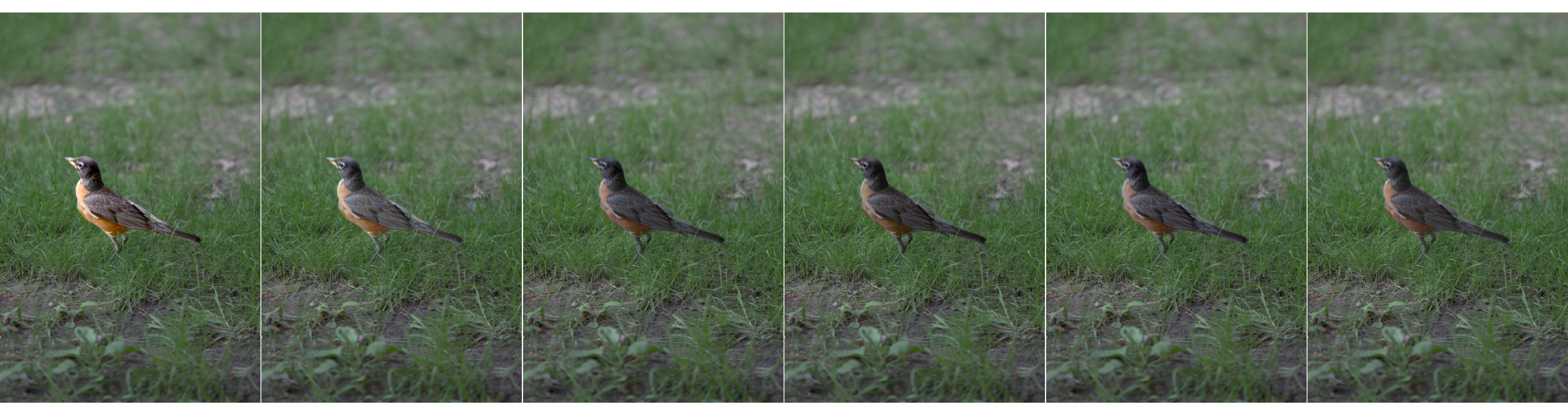}
    \includegraphics[width=0.75\linewidth]{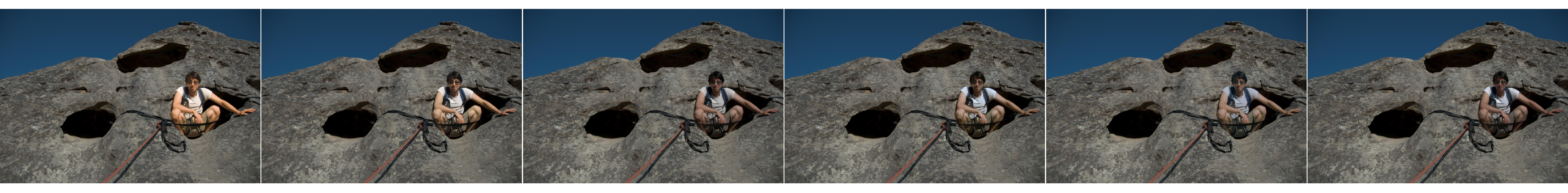}
    \includegraphics[width=0.75\linewidth]{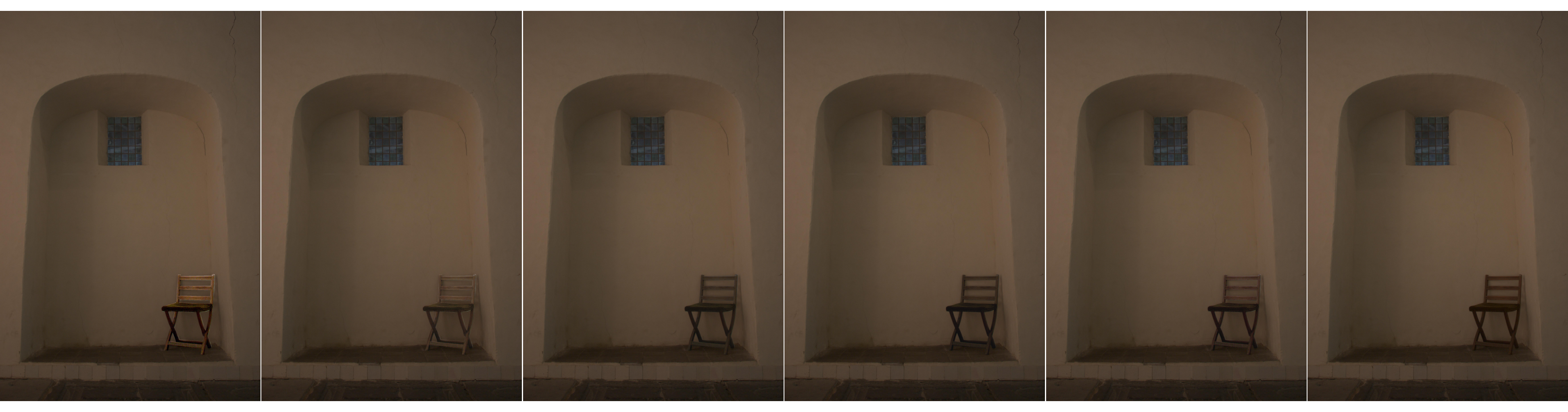}
    \includegraphics[width=0.75\linewidth]{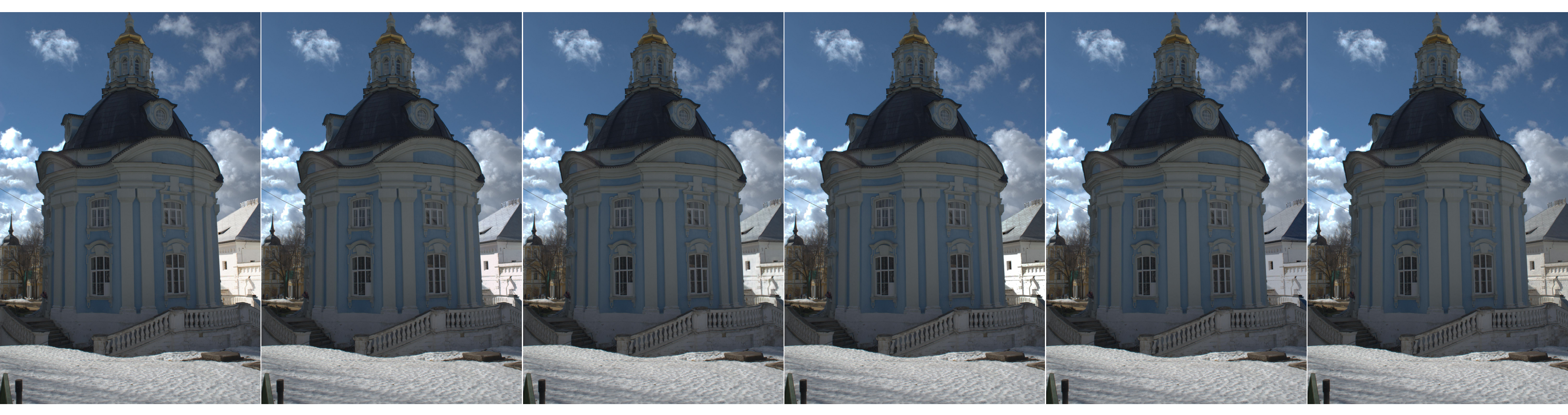}
    \includegraphics[width=0.75\linewidth]{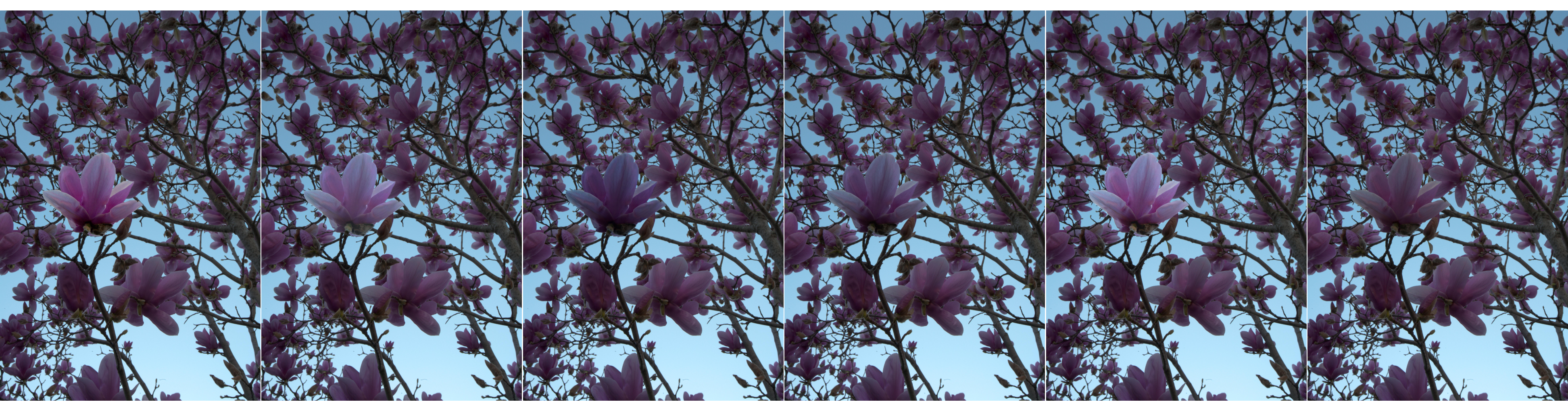}
    \includegraphics[width=0.75\linewidth]{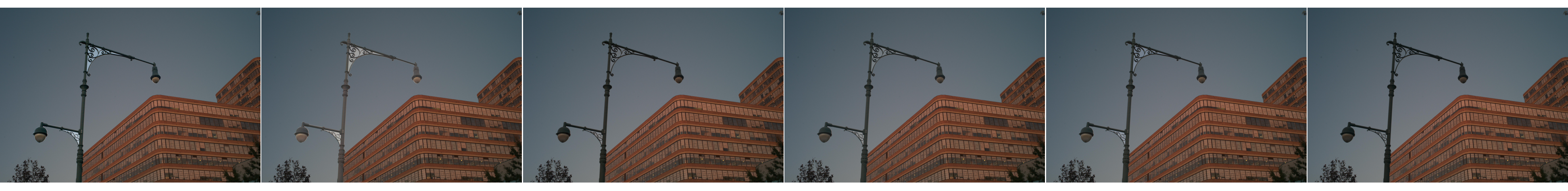}
    \caption{Qualitative comparison with baselines and true images on HAdobe5k test set.}
    \label{fig:qualitative_comparsion_iharmony}
\end{figure*}

\end{document}